\DeclareMathOperator*{\argmin}{arg\,min}
\DeclareMathOperator{\E}{\mathbb{E}}
\DeclareMathOperator{\Prob}{\mathbb{P}}
\DeclareMathOperator{\Q}{\mathbb{Q}}
\DeclareMathOperator{\R}{\mathbb{R}}
\DeclareMathOperator{\N}{\mathbb{N}}
\DeclareMathOperator{\kl}{KL}
\newtheorem{lemma}{Lemma}
\newtheorem{definition}{Definition}
\newtheorem{proposition}{Proposition}
\newtheorem{corollary}{Corollary}
\newtheorem{remark}{Remark}
\newcommand{\sr}{{\tt SR}}
\newcommand{\moss}{{\tt MOSS\,}}
\newcommand{\mossPlus}{{\tt MOSS++\,}}
\newcommand{\mossPlusEmp}{{\tt empMOSS++\,}}
\newcommand{\algParallel}{{\tt Parallel\,}}
\newcommand{\mossOracle}{{\tt MOSS Oracle\,}}
\newcommand{\quantile}{{\tt Quantile\,}}
\title{On Regret with Multiple Best Arms}
\author{%
  Yinglun Zhu\\
  Department of Computer Sciences\\
  University of Wisconsin-Madison\\
  Madison, WI 53706 \\
  \texttt{yinglun@cs.wisc.edu} \\
   \And
   Robert Nowak \\
   Department of Electrical and Computer Engineering\\
   University of Wisconsin-Madison \\
   Madison, WI 53706 \\
   \texttt{rdnowak@wisc.edu} \\
}
\begin{document}

\maketitle

\begin{abstract}
We study a regret minimization problem with the existence of multiple best/near-optimal arms in the multi-armed bandit setting. We consider the case when the number of arms/actions is comparable or much larger than the time horizon, and make \emph{no} assumptions about the structure of the bandit instance. Our goal is to design algorithms that can automatically adapt to the \emph{unknown} hardness of the problem, i.e., the number of best arms. Our setting captures many modern applications of bandit algorithms where the action space is enormous and the information about the underlying instance/structure is unavailable. We first propose an adaptive algorithm that is agnostic to the hardness level and theoretically derive its regret bound. We then prove a lower bound for our problem setting, which indicates: (1) no algorithm can be minimax optimal simultaneously over all hardness levels; and (2) our algorithm achieves a rate function that is Pareto optimal. With additional knowledge of the expected reward of the best arm, we propose another adaptive algorithm that is minimax optimal, up to polylog factors, over \emph{all} hardness levels. Experimental results confirm our theoretical guarantees and show advantages of our algorithms over the previous state-of-the-art.
\end{abstract}

\section{Introduction}
\label{section:intro}

Multi-armed bandit problems describe exploration-exploitation trade-offs in sequential decision making. Most existing bandit algorithms tend to provide regret guarantees when the number of available arms/actions is smaller than the time horizon. In modern applications of bandit algorithm, however, the action space is usually comparable or even much larger than the allowed time horizon so that many existing bandit algorithms cannot even complete their initial exploration phases. Consider a problem of personalized recommendations, for example. For most users, the total number of movies, or even the amount of sub-categories, far exceeds the number of times they visit a recommendation site.  Similarly, the enormous amount of user-generated content on YouTube and Twitter makes it increasingly challenging to make optimal recommendations. The tension between very large action space and the allowed time horizon pose a realistic problem in which deploying algorithms that converge to an optimal solution over an asymptotically long time horizon \emph{do not} give satisfying results. There is a need to design algorithms that can exploit the highest possible reward within a \emph{limited} time horizon. Past work has partially addressed this challenge.  The quantile regret proposed in \cite{chaudhuri2018quantile} to calculate regret with respect to an satisfactory action rather than the best one. The discounted regret analyzed in \cite{ryzhov2012knowledge, russo2018satisficing} is used to emphasize short time horizon performance. Other existing works consider the extreme case when the number of actions is indeed infinite, and tackle such problems with one of two main assumptions: (1) the discovery of  a near-optimal/best arm follows some probability measure with \emph{known}  parameters \cite{berry1997bandit, wang2009algorithms,aziz2018pure, ghalme2020ballooning}; (2) the existence of a \emph{smooth} function represents the mean-payoff over a continuous subset \cite{agrawal1995continuum, kleinberg2005nearly, kleinberg2008multi, bubeck2011x, locatelli2018adaptivity, hadiji2019polynomial}. However, in many situations, neither assumption may be realistic. We make minimal assumptions in this paper. We study the regret minimization problem over a time horizon $T$, which might be unknown, with respect to a bandit instance with $n$ total arms, out of which $m$ are best/near-optimal arms. We emphasize that the allowed time horizon and the given bandit instance should be viewed as features of \emph{one} problem and together they indicate an intrinsic hardness level. We consider the case when $n$ is comparable to or larger than $T$ so that no standard algorithm provides satisfying result. Our goal is to design algorithms that could adapt to the \emph{unknown} $m$ and achieve optimal regret.

\subsection{Contributions and paper organization}
We make the following contributions. In \cref{section:problem_setting}, we formally define the regret minimization problem that represents the tension between a very large action space and a limited time horizon; and capture the hardness level in terms of the number of best arms. We provide an adaptive algorithm that is agnostic to the \emph{unknown} number of best arms in \cref{section:adaptive}, and theoretically derive its regret bound. In \cref{section:lower_bound_pareto_optimal}, we prove a lower bound for our problem setting that indicates that there is no algorithm that can be optimal simultaneously over all hardness levels. Our lower bound also shows that our algorithm provided in \cref{section:adaptive} is Pareto optimal. With additional knowledge of the expected reward of the best arm, in \cref{section:extra_info}, we provide an algorithm that achieves the non-adaptive minimax optimal regret, up to polylog factors, without the knowledge of the number of best arms. Experiments conducted in \cref{section:experiment} confirm our theoretical guarantees and show advantages of our algorithms over previous state-of-the-art. We conclude our paper in \cref{section:conclusion}. Most of the proofs are deferred to the Appendix due to lack of space.

\subsection{Related work}

\textbf{Time sensitivity and large action space.} As bandit models are getting much more complex, usually with large or infinite action spaces, researchers have begun to pay attention to tradeoffs between regret and time horizons when deploying such models. \cite{deshpande2012linear} study a linear bandit problem with ultra-high dimension, and provide algorithms that, under various assumptions, can achieve good reward within short time horizon. \cite{russo2018satisficing} also take time horizon into account and model time preference by analyzing a discounted regret. \cite{chaudhuri2018quantile} consider a quantile regret minimization problem where they define their regret with respect to expected reward ranked at $(1-\rho)$-th quantile. One could easily transfer their problem to our setting; however, their regret guarantee is sub-optimal. \cite{katz2019true, aziz2018pure} also consider the problem with $m$ best/near-optimal arms with no other assumptions, but they focus on the pure exploration setting; \cite{aziz2018pure} additionally requires the knowledge of $m$. Another line of research considers the extreme case when the number arms is infinite, but with some \emph{known} regularities. \cite{berry1997bandit} proposes an algorithm with a minimax optimality guarantee under the situation where the reward of each arm follows \emph{strictly} Bernoulli distribution; \cite{teytaud:inria-00173263} provides an anytime algorithm that works under the same assumption. \cite{wang2009algorithms} relaxes the assumption on Bernoulli reward distribution, however, some other parameters are assumed to be known in their setting.

\textbf{Continuum-armed bandit.} Many papers also study bandit problems with continuous action spaces, where they embed each arm $x$ into a bounded subset ${\cal X} \subseteq \R^d$ and assume there exists a smooth function $f$ governing the mean-payoff for each arm. This setting is firstly introduced by \cite{agrawal1995continuum}. When the smoothness parameters are known to the learner or under various assumptions, there exists algorithms \cite{kleinberg2005nearly, kleinberg2008multi, bubeck2011x} with near-optimal regret guarantees. When the smoothness parameters are unknown, however, 
\cite{locatelli2018adaptivity} proves a lower bound indicating no strategy can be optimal simultaneously over all smoothness classes; under extra information, they provide adaptive algorithms with near-optimal regret guarantees. Although achieving optimal regret for all settings is impossible, \cite{hadiji2019polynomial} design adaptive algorithms and prove that they are Pareto optimal. Our algorithms are mainly inspired by the ones in \cite{hadiji2019polynomial, locatelli2018adaptivity}. A closely related line of work \cite{valko2013stochastic, grill2015black, bartlett2018simple, shang2019general} aims at minimizing simple regret in the continuum-armed bandit setting.

\textbf{Adaptivity to unknown parameters.} \cite{bubeck2011lipschitz} argues the awareness of regularity is flawed and one should design algorithms that can \emph{adapt} to the unknown environment. In situations where the goal is pure exploration or simple regret minimization, \cite{katz2019true, valko2013stochastic, grill2015black, bartlett2018simple, shang2019general} achieve near-optimal guarantees with unknown regularity because their objectives trade-off exploitation in favor of exploration. In the case of cumulative regret minimization, however, \cite{locatelli2018adaptivity} shows no strategy can be optimal simultaneously over all smoothness classes. In special situations or under extra information, \cite{bubeck2011lipschitz, bull2015adaptive, locatelli2018adaptivity} provide algorithms that adapt in different ways. \cite{hadiji2019polynomial} borrows the concept of Pareto optimality from economics and provide algorithms with rate functions that are Pareto optimal. Adaptivity is studied in statistics as well: in some cases, only additional logarithmic factors are required  \cite{lepskii1991problem, birge1997model}; in others, however, there exists an additional polynomial cost of adaptation \cite{cai2005adaptive}.

\section{Problem statement and notation}
\label{section:problem_setting}

We consider the multi-armed bandit instance $\underline{\nu} = (\nu_1, \dots, \nu_n)$ with $n$ probability distributions with means $\mu_i = \E_{X \sim \nu_i} [X] \in [0, 1]$. Let $\mu_\star = \max_{i \in [n]} \{\mu_i\}$ be the highest mean and $S_{\star} = \{i \in [n]: \mu_i = \mu_\star\}$ denote the subset of best arms.\footnote{Throughout the paper, we denote by $[K]$ the set $\{1, \dots, K\}$ for any positive integer $K$.} The cardinality $| S_\star|  = m$ is \emph{unknown} to the learner. {We could also generalize our setting to $S^{\prime}_\star = \{ i \in [n]: \mu_i \geq \mu_\star - \epsilon(T) \}$ with unknown $|S^\prime_\star|$ (i.e., situations where there is an unknown number of near-optimal arms). Setting $\epsilon$ to be dependent on $T$ is to avoid an additive term linear in $T$, e.g., $\epsilon \leq 1/\sqrt{T} \Rightarrow \epsilon T \leq \sqrt{T}$. All theoretical results and algorithms presented in this paper are applicable to this generalized setting with minor modifications. For ease of exposition, we focus on the case with multiple best arms throughout the paper.} At each time step $t = [T]$, the algorithm/learner selects an action $A_t \in [n]$ and receives an independent reward $X_t \sim \nu_{A_t}$. We assume that $X_t - \mu_{A_t}$ is $(1/4)-$sub-Gaussian conditionally on $A_t$. We measure the success of an algorithm through the expected cumulative (pseudo) regret:
\begin{align}
	R_T  = T \cdot \mu_\star -  \E \left[ \sum_{t=1}^T \mu_{A_t}\right].\nonumber
\end{align}

We use ${\cal R}(T, n, m)$ to denote the set of regret minimization problems with allowed time horizon $T$ and any bandit instance $\underline{\nu}$ with $n$ total arms and $m$ best arms.\footnote{Our setting could be generalized to the case with infinite arms: one can consider embedding arms into an arm space ${\cal X}$ and let $p$ be the probability that an arm sampled uniformly at random is (near-) optimal. $1/p$ will then serve a similar role as $n/m$ does in the original definition.} We emphasize that $T$ is part of the problem instance. We are particularly interested in the case when $n$ is comparable or even larger than $T$, which captures many modern applications where the available action space far exceeds the allowed time horizon. Although learning algorithms may not be able to pull each arm once, one should notice that the true/intrinsic hardness level of the problem could be viewed as $n/m$: selecting a subset uniformly at random with cardinality ${\Theta}(n/m)$ guarantees, with constant probability, the access to at least one best arm; but of course it is impossible to do this without knowing $m$. We quantify the \emph{intrinsic} hardness level over a set of regret minimization problems ${\cal R}(T, n, m)$ as
\begin{align}
    \psi({\cal R}(T, n, m)) = \inf \{ \alpha \geq 0: n/m \leq 2T^{\alpha}  \}, \nonumber
\end{align}
where the constant $2$ in front of $T^\alpha$ is added to avoid otherwise the trivial case with all best arms when the infimum is $0$. $\psi({\cal R}(T, n, m))$ is used here as it captures the \emph{minimax} optimal regret over the set of regret minimization problem ${\cal R}(T, n, m)$, as explained later in our review of the \moss algorithm and the lower bound. As smaller $\psi({\cal R}(T, n, m))$ indicates easier problems, we then define the family of regret minimization problems with hardness level at most $\alpha$ as 
\begin{align}
	{\cal H}_T(\alpha) = \{ \cup {\cal R}(T, n, m) : \psi({\cal R}(T, n, m))\leq \alpha \}, \nonumber
\end{align}
with $\alpha \in [0, 1]$. Although $T$ is necessary to define a regret minimization problem, we actually encode the hardness level into a single parameter $\alpha$, which captures the \emph{tension} between the complexity of bandit instance at hand and the allowed time horizon $T$: problems with different time horizons but the same $\alpha$ are equally difficult in terms of the achievable minimax regret (the exponent of $T$). We thus mainly study problems with $T$ large enough so that we could mainly focus on the polynomial terms of $T$. We are interested in designing algorithms with \emph{minimax} guarantees over ${\cal H}_T(\alpha)$, but \emph{without} the knowledge of $\alpha$.

 \textbf{\moss and upper bound.} In the classical setting, \moss, designed by \cite{audibert2009minimax} and further generalized to the sub-Gaussian setting \cite{lattimore2020bandit} and improved in terms of constant factors \cite{garivier2018kl}, achieves the minimax optimal regret. In this paper, we will use \moss as a subroutine with regret upper bound $18\sqrt{nT}$ when $T \geq n$. For any problem in ${\cal H}_T(\alpha)$ with \emph{known} $\alpha$, one could run \moss on a subset selected uniformly at random with cardinality $\widetilde{O}(T^{\alpha})$ and achieve regret $\widetilde{O}(T^{(1+\alpha)/2})$. 
 
 \textbf{Lower bound.} The lower bound $\Omega(\sqrt{nT})$ in the classical setting does not work for our setting as its proof heavily relies on the existence of single best arm \cite{lattimore2020bandit}. However, for problems in ${\cal H}_T(\alpha)$, we do have a matching lower bound $\Omega(T^{(1+\alpha)/2})$ as one could always apply the standard lower bound on an bandit instance with $n = \lfloor T^\alpha \rfloor$ and $m=1$. For general value of $m$, a lower bound of the order $\Omega(\sqrt{T(n-m)/m})=\Omega(T^{(1+\alpha)/2})$ for the $m$-best arms case could be obtained following similar analysis in Chapter 15 of \cite{lattimore2020bandit}.

Although $\log T$ may appear in our bounds, throughout the paper, we focus on problems with $T \geq 2$ as otherwise the bound is trivial.


\section{An adaptive algorithm}
\label{section:adaptive}

\cref{alg:regret_multi} takes time horizon $T$ and a user-specified $\beta \in [1/2, 1]$ as input, and it is mainly inspired by \cite{hadiji2019polynomial}. \cref{alg:regret_multi} operates in iterations with geometrically-increasing length $\Delta T_i = 2^{p+i}$ with $p = \lceil \log_2 T^\beta \rceil$. At each iteration $i$, it restarts \moss on a set $S_i$ consisting of $K_i = 2^{p+2-i}$ real arms selected uniformly at random without replacement \emph{plus} a set of ``virtual'' \emph{mixture-arms} (one from each of the $1\leq j<i$ previous iterations, none if $i=1$). The mixture-arms are constructed as follows. After each iteration $i$, let $\widehat p_i$ denote the vector of empirical sampling frequencies of the arms in that iteration (i.e., the $k$-th element of $\widehat p_i$ is the number of times arm $k$, including all previously constructed mixture-arms, was sampled in iteration $i$ divided by the total number of samples $\Delta T_i$).  The mixture-arm for iteration $i$ is the $\widehat p_i$-mixture of the arms, denoted by $\widetilde \nu_i$.  When \moss samples from $\widetilde \nu_i$ it first draws $i_t \sim \widehat p_i$, then draws a sample from the corresponding arm $\nu_{i_t}$ (or $\widetilde{\nu}_{i_t}$).  The mixture-arms provide a convenient summary of the information gained in the previous iterations, which is key to our theoretical analysis. Although our algorithm is working on fewer regular arms in later iterations, information summarized in mixture-arms is good enough to provide guarantees. We name our algorithm \mossPlus as it restarts \moss at each iteration with past information summarized in mixture-arms. We provide an anytime version of \cref{alg:regret_multi} in \cref{appendix:anytime_multi} via the standard doubling trick.

\begin{algorithm}[]
	\caption{\mossPlus}
	\label{alg:regret_multi} 
	\renewcommand{\algorithmicrequire}{\textbf{Input:}}
	\renewcommand{\algorithmicensure}{\textbf{Output:}}
	\begin{algorithmic}[1]
		\REQUIRE Time horizon $T$ and user-specified parameter $\beta \in [1/2, 1]$.
		\STATE \textbf{Set:} $p = \lceil \log_2 T^\beta \rceil$, $K_i = 2^{p+2-i}$ and $\Delta T_i = \min \{2^{p + i}, T\}$.
		\FOR {$i = 1, \dots, p$}
		\STATE Run \moss on a subset of arms $S_i$ for $\Delta T_i$ rounds. $S_i$ contains $K_i$ real arms selected uniformly at random \emph{and} set of virtual mixture-arms from previous iterations, i.e., $\{\widetilde{\nu}_j\}_{j < i}$.
		\STATE Construct a virtual mixture-arm $\widetilde{\nu}_i$ based on empirical sampling frequencies of \moss above.
		\ENDFOR 
	\end{algorithmic}
\end{algorithm}

\subsection{Analysis and discussion}
\label{section:adaptive_analysis}

We use $\mu_{S} = \max_{\nu \in S} \{\E_{X \sim \nu}[X]\}$ to denote the highest expected reward over a set of distributions/arms $S$. For any algorithm that only works on $S$, we can decompose the regret into approximation error and learning error:
\begin{align}
\label{eq:regret_decomposition}
R_T = \underbrace{T \cdot (\mu_\star - \mu_S)}_{\text{approximation error due to the selection of $S$}} + \underbrace{T\cdot \mu_S - \E \left[ \sum_{t=1}^T \mu_{A_t}\right]}_{\text{learning error based on sampling rule $\{A_t\}_{t=1}^T$}}.
\end{align}

This type of regret decomposition was previously used in \cite{kleinberg2005nearly, auer2007improved, hadiji2019polynomial} to deal with the continuum-armed bandit problem. We consider here a probabilistic version, with randomness in the selection of $S$, for the classical setting.

The main idea behind providing guarantees for \mossPlus is to decompose its regret at each iteration, using \cref{eq:regret_decomposition}, and then bound the expected approximation error and learning error separately. The expected learning error at each iteration could always be controlled as $\widetilde{O}(T^\beta)$ thanks to regret guarantees for \moss and specifically chosen parameters $p$, $K_i$, $\Delta T_i$. Let $i_\star$ be the largest integer such that $K_i \geq 2T^\alpha \log \sqrt{T}$ still holds. The expected approximation error in iteration $i \leq i_\star$ could be upper bounded by $\sqrt{T}$ following an analysis on hypergeometric distribution. As a result, the expected regret in iteration $i \leq i_\star$ is $\widetilde{O}(T^\beta)$. Since the mixture-arm $\widetilde{\nu}_{i\star}$ is included in all following iterations, we could further bound the expected approximation error in iteration $i > i_\star$ by $\widetilde{O}(T^{1+\alpha-\beta})$ after a careful analysis on $\Delta T_i/ \Delta T_{i_\star}$. This intuition is formally stated and proved in \cref{thm:regret_multi}.

\begin{restatable}{theorem}{regretMulti}
	\label{thm:regret_multi}
	Run \mossPlus with time horizon $T$ and an user-specified parameter $\beta \in [1/2, 1]$ leads to the following regret upper bound:
	\begin{align}
	\sup_{\omega \in {\cal H}_T(\alpha)} R_T \leq 160 \, (\log_2 T)^{5/2} \cdot T^{\min\{\max \{\beta, 1+\alpha - \beta \},1\}} = \widetilde{O} \left( T^{\min\{\max \{\beta, 1+\alpha - \beta \},1\}} \right). \nonumber
	\end{align} 
\end{restatable}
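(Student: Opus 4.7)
The plan is to follow the three-step outline sketched just above the theorem statement. Fix an instance in $\mathcal{H}_T(\alpha)$ and write $R_T = \sum_{i=1}^{p} R_T^{(i)}$, where $R_T^{(i)}$ is the regret incurred during iteration $i$ and $p = O(\log T)$. Apply the decomposition of \cref{eq:regret_decomposition} to each $R_T^{(i)}$, with $S$ and $T$ replaced by $S_i$ and $\Delta T_i$. The per-iteration learning error is handled uniformly: the schedule satisfies $K_i \Delta T_i = 2^{2p+2} \leq 16\, T^{2\beta}$ and $|S_i| \leq K_i + p \leq \Delta T_i$, so the $18\sqrt{|S_i|\,\Delta T_i}$ regret bound of \moss yields a learning error of $\widetilde O(T^\beta)$ in every iteration.

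The heart of the proof is the approximation error $\Delta T_i \cdot \E[\mu_\star - \mu_{S_i}]$. Let $i_\star$ be the largest index with $K_{i_\star} \geq 2 T^\alpha \log\sqrt T$. For $i \leq i_\star$, the hypergeometric estimate $\prod_{j=0}^{K_i-1}(n-m-j)/(n-j) \leq (1-m/n)^{K_i} \leq \exp(-K_i\, m/n) \leq 1/\sqrt T$ (using $n/m \leq 2T^\alpha$) controls the probability that $S_i$ misses every best arm, giving an approximation error of at most $\sqrt T$ per such iteration. For $i > i_\star$ every $S_i$ contains $\widetilde\nu_{i_\star}$, so $\mu_{S_i}$ is at least the mean of $\widetilde\nu_{i_\star}$, which equals $\sum_k \widehat p_{i_\star}(k)\mu_k$. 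Taking expectations, $\E[\text{mean}(\widetilde\nu_{i_\star})] = \E[\frac{1}{\Delta T_{i_\star}}\sum_t \mu_{A_t}] = \mu_\star - \E[R_T^{(i_\star)}]/\Delta T_{i_\star}$, which by the previous step is $\mu_\star - \widetilde O(T^\beta)/\Delta T_{i_\star}$. Combining $K_{i_\star} \leq 4T^\alpha\log\sqrt T$ with $K_{i_\star}\Delta T_{i_\star} \geq 4T^{2\beta}$ yields $\Delta T_{i_\star} = \widetilde \Omega(T^{2\beta-\alpha})$; multiplying by $\Delta T_i \leq T$ bounds the approximation error of each late iteration by $\widetilde O(T^{1+\alpha-\beta})$.

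Summing the $p = O(\log T)$ per-iteration bounds yields $R_T \leq \widetilde O(T^{\max\{\beta,\,1+\alpha-\beta\}})$, and the trivial bound $R_T \leq T$ caps the exponent at $1$, matching the $\min\{\max\{\beta,\,1+\alpha-\beta\},\,1\}$ form in the statement. The explicit constant $160\,(\log_2 T)^{5/2}$ emerges after tracking one $\sqrt{\log T}$ from the sub-Gaussian \moss analysis, one $\log\sqrt T$ from the definition of $i_\star$, and a factor of $p = O(\log T)$ from summing over iterations. The subtlest step is the mixture-arm argument: I need to lower-bound $\mu_{S_i}$ (a maximum over the random set) by the mean of the specific arm $\widetilde\nu_{i_\star}$, and to verify that conditioning on the random set $S_{i_\star}$ does not break \moss's regret bound once mixture arms are included --- this is valid because each mixture arm is itself a sub-Gaussian arm with a well-defined mean, so \moss on the enlarged $S_{i_\star}$ retains its guarantee and the outer expectation over the draw of $S_{i_\star}$ is what ultimately links the per-round reward rate of iteration $i_\star$ back to $\mu_\star$.
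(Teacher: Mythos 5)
Your proposal follows essentially the same route as the paper's proof: the same per-iteration approximation/learning decomposition, the same hypergeometric bound $\exp(-K_i m/n)\le 1/\sqrt T$ for iterations $i\le i_\star$, and the same key step of anchoring the approximation error of late iterations to the regret of iteration $i_\star$ via the mixture arm, using $K_{i_\star}\Delta T_{i_\star}=2^{2p+2}$ to get $\Delta T_{i_\star}=\widetilde\Omega(T^{2\beta-\alpha})$. The only detail you leave unaddressed is the corner case where no index satisfies $K_i\ge 2T^\alpha\log\sqrt T$ (possible when $\beta$ barely exceeds $\alpha$); the paper patches this by setting $i_\star=1$ and using $K_1\ge 2T^\beta>2T^\alpha$ to still bound the approximation error by $2T^\beta$, after which your argument goes through unchanged.
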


\begin{remark}
    We primarily focus on the polynomial terms in $T$ when deriving the bound, but put no effort in optimizing the polylog term. The $5/2$ exponent of $\log_2 T$ might be tightened as well.
\end{remark}

The theoretical guarantee is closely related to the user-specified parameter $\beta$: when $\beta > \alpha$, we suffer a multiplicative cost of adaptation  $\widetilde{O}(T^{|(2\beta - \alpha  - 1)/2|})$, with $\beta = (1+\alpha)/2$ hitting the sweet spot, comparing to non-adaptive minimax regret; when $\beta \leq \alpha$, there is essentially no guarantees. One may hope to improve this result. However, our analysis in \cref{section:lower_bound_pareto_optimal} indicates: (1) achieving minimax optimal regret for all settings simultaneously is \emph{impossible}; and (2) the rate function achieved by \mossPlus is already \emph{Pareto optimal}.

\section{Lower bound and Pareto optimality}
\label{section:lower_bound_pareto_optimal}

\subsection{Lower bound}
\label{section:lower_bound}
In this section, we show that designing algorithms with the non-adaptive minimax optimal guarantee over all values of $\alpha$ is impossible. We first state the result in the following general theorem.

\begin{restatable}{theorem}{lowerBound}
	\label{thm:lower_bound}
	For any $0 \leq \alpha^\prime < \alpha \leq 1$, assume $T^{\alpha} \leq B$ and $\lfloor T^\alpha  \rfloor -1 \geq \max\{ T^\alpha/4, 2\}$. If an algorithm is such that $\sup_{\omega \in {\cal H}_T(\alpha^\prime)} R_T \leq B$, then the regret of this algorithm is lower bounded on ${\cal H}_T(\alpha)$:
	\begin{align}
	\sup_{\omega \in {\cal H}_T(\alpha)} R_T \geq 2^{-10} T^{1+\alpha} B^{-1}. \label{eq:lower_bound}
	\end{align}
\end{restatable}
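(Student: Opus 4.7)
The plan is to prove \cref{thm:lower_bound} via a two-world change-of-measure argument. I will construct an ``easy'' instance $\omega_0 \in {\cal H}_T(\alpha')$ on which the algorithm is assumed to achieve regret at most $B$, together with a family of ``hard'' instances $\{\omega_j\}_j \subset {\cal H}_T(\alpha)$ obtained from $\omega_0$ by promoting a single suboptimal arm to the unique best arm. Performing well on $\omega_0$ forces the algorithm to spend very few pulls on each suboptimal arm of $\omega_0$, so by pigeonhole there is some index $j^\star$ that is sampled so rarely that the algorithm cannot statistically distinguish $\omega_{j^\star}$ from $\omega_0$, which prevents it from locating the unique best arm of $\omega_{j^\star}$ and forces large regret.

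Concretely, I would take $n = \lfloor T^\alpha \rfloor$ and $m' = \lceil n/(2T^{\alpha'}) \rceil$, and use shifted Gaussian rewards with variance $1/4$ (which are $(1/4)$-sub-Gaussian). Let $\omega_0$ have $m'$ ``good'' arms of mean $1/2$ and $n-m'$ ``bad'' arms of mean $1/2 - \Delta$; then $n/m' \leq 2T^{\alpha'}$, so $\omega_0 \in {\cal H}_T(\alpha')$. For each bad index $j$, let $\omega_j$ be obtained from $\omega_0$ by raising the mean of arm $j$ to $1/2 + \epsilon$; arm $j$ is now the unique optimum and since $n \leq T^\alpha \leq 2T^\alpha$, $\omega_j \in {\cal H}_T(\alpha)$. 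The hypothesis $\lfloor T^\alpha \rfloor - 1 \geq \max\{T^\alpha/4,2\}$ ensures that $n - m'$ is of order $T^\alpha$ (so the pigeonhole step below is strong enough), while $T^\alpha \leq B$ keeps the perturbation $\epsilon$ below $1/2$ so that all means remain in $[0,1]$.

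From $R_T(\omega_0) = \Delta \sum_{j \text{ bad}} \E^{\omega_0}[T_j(T)] \leq B$, pigeonhole yields a bad index $j^\star$ with $\E^{\omega_0}[T_{j^\star}(T)] \leq B/(\Delta(n-m'))$. I would then apply the Bretagnolle--Huber inequality to the event $A = \{T_{j^\star}(T) > T/2\}$. Markov's inequality gives $\Prob^{\omega_0}(A) \leq 2B/(\Delta(n-m')T)$, and the divergence decomposition yields $\kl(\Prob^{\omega_0}, \Prob^{\omega_{j^\star}}) = 2(\Delta+\epsilon)^2 \, \E^{\omega_0}[T_{j^\star}(T)] \leq 2(\Delta+\epsilon)^2 B / (\Delta(n-m'))$, since the two instances differ only at arm $j^\star$. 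Choosing $\Delta = \epsilon$ proportional to $(n-m')/B$ keeps both quantities below small absolute constants, so Bretagnolle--Huber forces $\Prob^{\omega_{j^\star}}(A^c)$ bounded away from $0$. Under $\omega_{j^\star}$ each pull of a non-$j^\star$ arm costs at least $\epsilon$ in regret, hence $R_T(\omega_{j^\star}) \geq (\epsilon T/2) \cdot \Prob^{\omega_{j^\star}}(A^c)$, which after tracking constants simplifies to the claimed $2^{-10} T^{1+\alpha} B^{-1}$.

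The main obstacle is the joint calibration of the three scales $\Delta$, $\epsilon$, and $n-m'$ so that simultaneously (i) the hardness memberships hold, (ii) $\Prob^{\omega_0}(A)$ is a small numerical constant (this is where $T^\alpha \leq B$ is critically used to absorb the Markov bound), (iii) the KL stays bounded by a constant, which forces $\epsilon \lesssim (n-m')/B$, and (iv) the perturbed means remain in $[0,1]$. The quantitative technical conditions in the statement of \cref{thm:lower_bound} are exactly what is required to make this calibration go through with the clean constant $2^{-10}$; the rest of the argument is a mechanical tracking of constants and does not involve any new idea beyond the standard two-world construction.
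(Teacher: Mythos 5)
Your overall strategy (perturb a base instance in ${\cal H}_T(\alpha')$ into hard instances in ${\cal H}_T(\alpha)$ and argue by change of measure) is the right one, but note that it departs from the paper's proof in a substantive way: the paper promotes an entire \emph{group} of $m$ arms in each of $K=\lfloor T^\alpha\rfloor-1$ alternative instances, lower-bounds the \emph{average} regret over all alternatives, and applies a generalized Pinsker inequality to $N_{S_i}(T)/T$; you promote a single arm, pigeonhole, and invoke Bretagnolle--Huber on $A=\{T_{j^\star}(T)>T/2\}$.

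There is a genuine gap in your calibration. Your only control on $\E^{\omega_0}[T_{j^\star}(T)]$ comes from the regret pigeonhole, $\E^{\omega_0}[T_{j^\star}(T)]\le B/(\Delta(n-m'))$, so Markov gives $\Prob^{\omega_0}(A)\le 2B/(\Delta(n-m')T)$. Once you set $\Delta\asymp(n-m')/B$ to keep the KL bounded (which you must), this probability becomes $\asymp B^2/((n-m')^2T)\asymp B^2/T^{1+2\alpha}$, which is \emph{not} a small constant once $B\gtrsim T^{\alpha+1/2}$ --- and the theorem assumes only $B\ge T^\alpha$, with no upper bound on $B$. So your claim that the stated hypotheses are ``exactly what is required'' for conditions (i)--(iv) to hold simultaneously is incorrect. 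The standard repair is a second pigeonhole using the trivial identity $\sum_j\E^{\omega_0}[T_j(T)]\le T$, which produces a $j^\star$ satisfying both $\E^{\omega_0}[T_{j^\star}(T)]\le 2B/(\Delta(n-m'))$ and $\E^{\omega_0}[T_{j^\star}(T)]\le 2T/(n-m')$; the latter makes $\Prob^{\omega_0}(A)\le 4/(n-m')$ independently of $B$. Even then you need $n-m'$ to exceed a moderate absolute constant, which the hypothesis $\lfloor T^\alpha\rfloor-1\ge\max\{T^\alpha/4,2\}$ does not by itself guarantee. The paper's averaging argument sidesteps both issues: the pull-count term enters only as $\frac1K\sum_i\E_0[N_{S_i}(T)]/T\le 1/K\le 1/2$, and the assumed bound $B$ is used only inside the KL term, which is why it yields the clean constant $2^{-10}$ under exactly the stated hypotheses.
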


To give an interpretation of \cref{thm:lower_bound}, we consider any algorithm/policy $\pi$ together with regret minimization problems ${\cal H}_T(\alpha^\prime)$ and ${\cal H}_T(\alpha)$ satisfying corresponding requirements. On one hand, if algorithm $\pi$ achieves a regret that is order-wise larger than $\widetilde{O}(T^{(1+\alpha^\prime)/2})$ over ${\cal H}_T(\alpha^\prime)$, it is already not minimax optimal for ${\cal H}_T(\alpha^\prime)$. Now suppose $\pi$ achieves a near-optimal regret, i.e., $\widetilde{O}(T^{(1+\alpha^\prime)/2})$, over ${\cal H}_T(\alpha^\prime)$; then, according to \cref{eq:lower_bound}, $\pi$ must incur a regret of order at least $\widetilde{\Omega}(T^{1/2 + \alpha - \alpha^\prime/2})$ on one problem in ${\cal H}_T(\alpha^\prime)$. This, on the other hand, makes algorithm $\pi$ strictly sub-optimal over ${\cal H}_T(\alpha)$.

\subsection{Pareto optimality}
\label{section:pareto_optimality}

We capture the performance of any algorithm by its dependence on polynomial terms of $T$ in the asymptotic sense. Note that the hardness level of a problem is encoded in $\alpha$.

\begin{definition}
	\label{def:rate}
	Let $\theta:[0,1] \rightarrow [0, 1]$ denote a non-decreasing function. An algorithm achieves the rate function $\theta$ if 
	\begin{align}
		\forall \epsilon > 0, \forall \alpha \in [0, 1], \quad \limsup_{T \rightarrow \infty} \frac{\sup_{\omega \in {\cal H}_T(\alpha)} R_T}{T^{\theta(\alpha)+\epsilon}} < + \infty. \nonumber 
	\end{align}
\end{definition}

Recall that a function $\theta^\prime$ is strictly smaller than another function $\theta$ in pointwise order if ${\theta^\prime}(\alpha) \leq \theta(\alpha)$ for all $\alpha$ and ${\theta^\prime}(\alpha_0) < \theta(\alpha_0)$ for at least one value of $\alpha_0$. As there may not always exist a pointwise ordering over rate functions, following \cite{hadiji2019polynomial}, we consider the notion of Pareto optimality over rate functions achieved by some algorithms.

\begin{definition}
	\label{def:pareto}
	A rate function $\theta$ is Pareto optimal if it is achieved by an algorithm, and there is no other algorithm achieving a strictly smaller rate function ${\theta^\prime}$ in pointwise order. An algorithm is Pareto optimal if it achieves a Pareto optimal rate function.
\end{definition}

Combining the results in \cref{thm:regret_multi} and \cref{thm:lower_bound} with above definitions, we could further obtain the following result in \cref{thm:pareto}.

\begin{wrapfigure}[14]{l}{{0.5\textwidth}}
    \vspace{-18pt}
    \centering
    \includegraphics[width=1\linewidth]{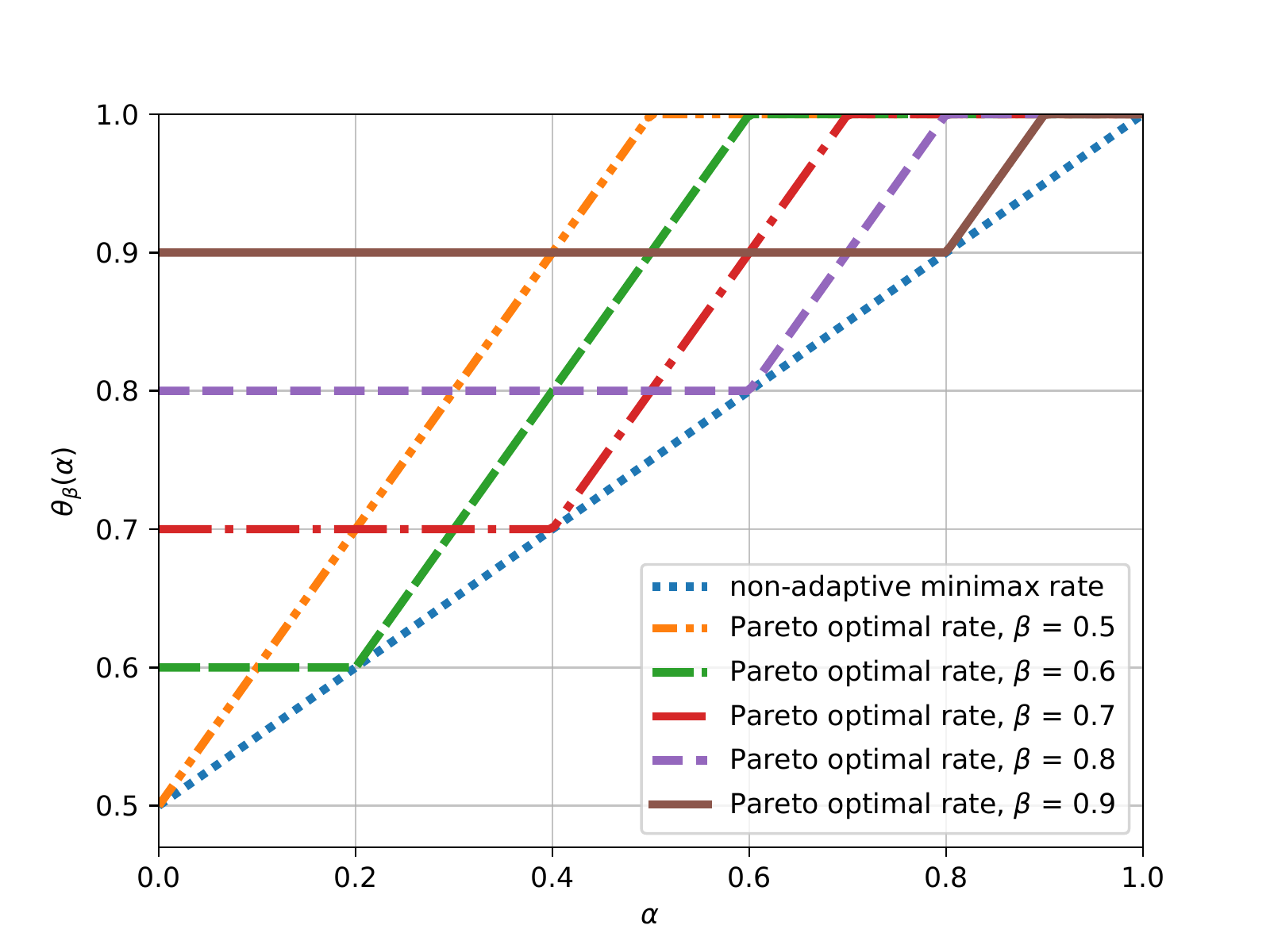}
    \caption{Pareto optimal rates}
    \label{fig:pareto}
\end{wrapfigure}

\begin{restatable}{theorem}{pareto}
	\label{thm:pareto}
	The rate function achieved by \mossPlus with any $ \beta \in [1/2, 1]$, i.e.,
	\begin{align}
	\theta_{\beta}: \alpha \mapsto \min \{ \max \{ \beta, 1 + \alpha - \beta\}, 1\}, \label{eq:admissible_rate}
	\end{align}
	is Pareto optimal.
\end{restatable}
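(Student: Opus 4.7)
The plan is to prove Pareto optimality by contradiction: assume some algorithm achieves a rate $\theta'$ with $\theta'(\alpha) \leq \theta_\beta(\alpha)$ for all $\alpha \in [0,1]$ and $\theta'(\alpha_\star) < \theta_\beta(\alpha_\star)$ at some $\alpha_\star$, and deduce $\theta' \equiv \theta_\beta$, the desired contradiction. Achievability of $\theta_\beta$ by \mossPlus is immediate from \cref{thm:regret_multi}. The essential consequence of \cref{thm:lower_bound}, obtained by substituting $B \asymp T^{\theta(\alpha')+\epsilon}$ into \cref{eq:lower_bound} and letting $\epsilon \to 0$, is the self-referential inequality
\[
\theta(\alpha) \;\geq\; 1 + \alpha - \theta(\alpha') \qquad \text{whenever } 0 \leq \alpha' < \alpha \leq \theta(\alpha') \leq 1,
\]
valid for any rate function $\theta$ achieved by some algorithm.

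The main argument proceeds in three steps that mirror the three pieces of $\theta_\beta$: the plateau $[0,2\beta-1]$, the rising segment $[2\beta-1,\beta]$, and the saturated region $[\beta,1]$ (assume $\beta \in [1/2,1)$; $\beta=1$ is essentially vacuous). \emph{Step 1.} For $\alpha \in [0, 2\beta-1]$, I would show $\theta'(\alpha)=\beta$. If instead $\theta'(\alpha) < \beta$, apply the displayed inequality with $\alpha'=\alpha$ and $\alpha \leftarrow \theta'(\alpha)$ (legitimate since the standard $\Omega(T^{(1+\alpha)/2})$ lower bound reviewed before \cref{section:adaptive} forces $\theta'(\alpha) > \alpha$ whenever $\alpha<1$); this produces $\theta'(\theta'(\alpha)) \geq 1$, but $\theta'(\alpha) < \beta$ implies $\theta_\beta(\theta'(\alpha)) < 1$, contradicting $\theta' \leq \theta_\beta$.

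\emph{Step 2.} For $\alpha \in (0,\beta]$, apply the inequality with $\alpha'=0$ (so that $\theta'(0)=\beta$ by Step 1): $\theta'(\alpha) \geq 1+\alpha-\beta$. Combined with $\theta'(\alpha) \leq \theta_\beta(\alpha)=1+\alpha-\beta$ on $[2\beta-1,\beta]$, equality is forced. \emph{Step 3.} For $\alpha \in (\beta,1]$, choose $\alpha' := \alpha+\beta-1 \in [2\beta-1,\beta)$; by Step 2, $\theta'(\alpha')=1+\alpha'-\beta=\alpha$, so $\alpha' < \alpha \leq \theta'(\alpha')$ and the displayed inequality yields $\theta'(\alpha) \geq 1+\alpha-\alpha=1=\theta_\beta(\alpha)$. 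Concatenating the three steps gives $\theta' \equiv \theta_\beta$ on $[0,1]$, contradicting $\theta'(\alpha_\star) < \theta_\beta(\alpha_\star)$.

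The main obstacle is not any single step but the careful bookkeeping needed to apply \cref{thm:lower_bound}: one has to verify the strict inequality $\alpha' < \alpha$, the constraint $\alpha \leq \theta(\alpha')$, and the technical conditions $T^\alpha \leq B$ and $\lfloor T^\alpha\rfloor - 1 \geq \max\{T^\alpha/4,2\}$ hold for the $T$-asymptotic regime, while correctly managing the $\epsilon$-slacks appearing in \cref{def:rate}. The choices $\alpha \leftarrow \theta'(\alpha)$ in Step 1 and $\alpha' \leftarrow \alpha+\beta-1$ in Step 3 are tuned precisely so the lower bound produced by the self-referential inequality meets the ceiling $\theta_\beta(\alpha)$ exactly, leaving no room for a strictly smaller achievable rate.
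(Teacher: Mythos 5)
Your proof is correct and takes essentially the same route as the paper: both turn \cref{thm:lower_bound} into the self-referential constraint $\theta(\alpha)\geq 1+\alpha-\theta(\alpha')$ (the paper's \cref{lm:rate_comparison}, stated there only for $\alpha'=0$) and contradict pointwise domination, the only organizational difference being that you instantiate the constraint at several base points to pin down $\theta'\equiv\theta_\beta$ directly, whereas the paper first proves the universal bound $\theta\geq\theta_{\theta(0)}$ (\cref{lm:rate_lower_bound}) via $\alpha'=0$ plus monotonicity of rate functions and then compares $\theta_{\beta'}$ against $\theta_\beta$ at the single point $\alpha=2\beta-1$. One shared caveat: both your argument and the paper's implicitly require $\beta<1$ (at $\beta=1$ the claim in fact fails, since $\theta_1\equiv 1$ is strictly dominated by the achievable rate $\theta_{1/2}$), so calling that case ``essentially vacuous'' is the same elision the paper makes.
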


\cref{fig:pareto} provides an illustration of the rate functions achieved by \mossPlus with different $\beta$ as input, as well as the non-adaptive minimax optimal rate.

\begin{remark}
One should notice that the naive algorithm running \moss on a subset selected uniformly at random with cardinality $\widetilde{O}(T^{\beta^\prime})$ is not Pareto optimal, since running \mossPlus with $\beta = (1+\beta^{\prime})/2$ leads to a strictly smaller rate function. The algorithm provided in \cite{chaudhuri2018quantile}, if transferred to our setting and allowing time horizon dependent quantile, is not Pareto optimal as well since it corresponds to the rate function $\theta(\alpha) = \max\{2.89 \, \alpha, 0.674\}$.

\end{remark}

\vspace{15 pt}

\section{Learning with extra information}
\label{section:extra_info}
Although previous \cref{section:lower_bound_pareto_optimal} gives negative results on designing algorithms that could optimally adapt to all settings, one could actually design such an algorithm \emph{with} extra information. In this section, we provide an algorithm that takes the expected reward of the best arm $\mu_{\star}$ (or an estimated one with error up to $1/\sqrt{T}$) as extra information, and achieves near minimax optimal regret over all settings simultaneously. Our algorithm is mainly inspired by \cite{locatelli2018adaptivity}.

\subsection{Algorithm}
\label{section:extra_algo}

We name our \cref{alg:extra_info} \algParallel as it maintains $\lceil \log T \rceil$ instances of subroutine, i.e., \cref{alg:moss_subroutine}, in parallel. Each subroutine $\sr_i$ is initialized with time horizon $T$ and hardness level $\alpha_i = i/\lceil \log T \rceil$. We use $T_{i, t}$ to denote the number of samples allocated to $\sr_i$ up to time $t$, and represent its empirical regret at time $t$ as $\widehat{R}_{i,t} = T_{i, t}\cdot  \mu_{\star} - \sum_{t=1}^{T_{i,t}} X_{i, t}$ with $X_{i,t} \sim \nu_{A_{i,t}}$ being the $t$-th empirical reward obtained \emph{by} $\sr_i$ and $A_{i,t}$ being the index of the $t$-th arm pulled \emph{by} $\sr_i$.

\begin{algorithm}[]
	\caption{\moss Subroutine ($\sr$)}
	\label{alg:moss_subroutine} 
	\renewcommand{\algorithmicrequire}{\textbf{Input:}}
	\renewcommand{\algorithmicensure}{\textbf{Output:}}
	\begin{algorithmic}[1]
		\REQUIRE Time horizon $T$ and hardness level $\alpha$.
		\STATE Select a subset of arms $S_{\alpha}$ uniformly at random with $|S_{\alpha}| = \min \{ \lceil 2T^{\alpha} \log \sqrt{T} \rceil, T \}$ and run \moss on $S_{\alpha}$.
	\end{algorithmic}
\end{algorithm}


\begin{algorithm}[]
	\caption{\algParallel}
	\label{alg:extra_info} 
	\renewcommand{\algorithmicrequire}{\textbf{Input:}}
	\renewcommand{\algorithmicensure}{\textbf{Output:}}
	\begin{algorithmic}[1]
		\REQUIRE Time horizon $T$ and the optimal reward $\mu_\star$.
		\STATE \textbf{set:} $p = \lceil \log T \rceil$, $\Delta = \lceil \sqrt{T} \rceil$ and $t = 0$.
		\FOR {$i = 1, \dots, p$}
		\STATE Set $\alpha_i = i/p$, initialize $\sr_i$ with $\alpha_i$, $T$; set $T_{i, t}=0$, and $\widehat{R}_{i, t} = 0$.
		\ENDFOR
		\FOR {$i = 1, \dots, \Delta-1$}
		\STATE  Select $k = \argmin_{i \in [p]} \widehat{R}_{i, t}$ and run $\sr_k$ for $\Delta$ rounds.
		\STATE Update $ T_{k, t}   = T_{k, t} + \Delta , \,
		\widehat{R}_{k, t}  = T_{k, t} \cdot \mu_\star - \sum_{t=1}^{T_{k, t}} X_{k, t} , \,
		t  = t + \Delta$.
		\ENDFOR 
	\end{algorithmic}
\end{algorithm}

\algParallel operates in iterations of length $\lceil \sqrt{T} \rceil$. At the beginning of each iteration, i.e., at time $t = i \cdot \lceil \sqrt{T} \rceil$ for $i \in \{0\} \cup [\lceil \sqrt{T} \rceil - 1]$, \algParallel first selects the subroutine with the lowest (breaking ties arbitrarily) empirical regret so far, i.e., $k = \argmin_{i \in [\lceil \log T \rceil]} \widehat{R}_{i, t}$; it then \emph{resumes} the learning process of $\sr_k$, from where it halted, for another $\lceil \sqrt{T} \rceil$ more pulls. All the information is updated at the end of that iteration. An anytime version of \cref{alg:extra_info} is provided in \cref{appendix_anytime_extra}.

\subsection{Analysis}
\label{section:extra_analysis}

As \algParallel discretizes the hardness parameter over a grid with interval $1/\lceil \log T \rceil$, we first show that running the best subroutine alone leads to regret $\widetilde{O} (T^{(1+\alpha)/2})$.

\begin{restatable}{lemma}{subroutineBest}
	\label{lm:subroutine_best}
	Suppose $\alpha$ is the true hardness parameter and $\alpha_i - 1/\lceil \log T \rceil < \alpha \leq \alpha_i$, run \cref{alg:moss_subroutine} with time horizon $T$ and $\alpha_i$ leads to the following regret bound:
	\begin{align}
	\sup_{\omega \in {\cal H}_T(\alpha)} R_T \leq 19  \sqrt{e}\, \log T \cdot T^{(1+\alpha )/2}  = \widetilde{O}\left( T^{(1+\alpha )/2} \right).\nonumber
	\end{align}
\end{restatable}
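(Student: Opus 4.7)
The plan is to apply the regret decomposition of \cref{eq:regret_decomposition} to the random subset $S_\alpha$ produced by \cref{alg:moss_subroutine}, and bound the two pieces separately using the two tools already reviewed in the paper: the \moss regret bound $18\sqrt{|S|T}$ when $T\geq|S|$, and a hypergeometric tail estimate for a uniformly random subset.

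For the approximation error, I would condition on whether $S_\alpha$ contains at least one best arm. Since $S_\alpha$ is drawn uniformly without replacement, the probability that it misses all $m$ best arms equals $\binom{n-m}{|S_\alpha|}/\binom{n}{|S_\alpha|}\leq(1-m/n)^{|S_\alpha|}\leq\exp(-m|S_\alpha|/n)$. Plugging in $n/m\leq 2T^\alpha$ (from $\omega\in{\cal H}_T(\alpha)$) and $|S_\alpha|\geq 2T^{\alpha_i}\log\sqrt T$ gives an exponent at least $T^{\alpha_i-\alpha}\log\sqrt T\geq\log\sqrt T$ (using $\alpha\leq\alpha_i$), so the miss probability is at most $1/\sqrt T$. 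On the complementary event $\mu_{S_\alpha}=\mu_\star$, so the expected approximation error is at most $T\cdot(1/\sqrt T)=\sqrt T$.

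For the learning error, \moss on $|S_\alpha|\leq T$ arms over $T$ rounds contributes at most $18\sqrt{|S_\alpha|T}$. The key quantitative step is the discretization bound: since $\alpha_i-1/\lceil\log T\rceil<\alpha$, one has $T^{\alpha_i-\alpha}\leq T^{1/\log T}=e$, so $|S_\alpha|\leq eT^\alpha\log T+1$; plugging in gives a learning error of order $\sqrt e\sqrt{\log T}\cdot T^{(1+\alpha)/2}$, and using $\sqrt{\log T}\leq\log T$ matches the target rate.

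Summing the two contributions and tracking constants yields the claimed $19\sqrt e\log T\cdot T^{(1+\alpha)/2}$ bound. One boundary case needs separate handling: when $\lceil 2T^{\alpha_i}\log\sqrt T\rceil\geq T$, so $|S_\alpha|=T$, \moss gives only $18\sqrt{T\cdot T}=18T$; but this regime forces $T^{\alpha_i}\gtrsim T/\log T$, hence $T^{(1+\alpha)/2}\gtrsim T/\sqrt{\log T}$, so the target still dominates $18T$ once $T$ is modestly large. The main obstacle is not conceptual but numerical: constants must be tracked carefully enough to land on $19\sqrt e$, and the specific choice of $|S_\alpha|$ of size $\lceil 2T^{\alpha_i}\log\sqrt T\rceil$ is precisely what makes the hypergeometric exponent carry a $\log\sqrt T$ factor independent of the $T^{\alpha_i-\alpha}$ slack from discretization.
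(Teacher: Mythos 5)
Your proposal is correct and follows essentially the same route as the paper's proof: the same split into approximation error (bounded via the without-replacement miss probability $\exp(-m|S_{\alpha_i}|/n)\leq 1/\sqrt{T}$) and learning error (bounded via the \moss guarantee $18\sqrt{|S_{\alpha_i}|T}$), with the same discretization step $T^{\alpha_i-\alpha}\leq T^{1/\lceil\log T\rceil}\leq e$ producing the $\sqrt{e}$ factor. The only cosmetic difference is the boundary case $\lceil 2T^{\alpha_i}\log\sqrt{T}\rceil\geq T$, where the paper uses the trivial bound $R_T\leq T\leq\lceil 2T^{\alpha_i}\log\sqrt{T}\rceil$ rather than your $18T$ from \moss, which avoids any "modestly large $T$" caveat.
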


Since \algParallel always allocates new samples to the subroutine with the lowest empirical regret so far, we know that the regret of every subroutine should be roughly of the same order at time $T$. In particular, all subroutines should achieve regret $\widetilde{O}(T^{(1+\alpha)/2})$, as the best subroutine does. \algParallel then achieves the non-adaptive minimax optimal regret, up to polylog factors, \emph{without} knowing the true hardness level $\alpha$.

\begin{restatable}{theorem}{extraInfo}
	\label{thm:extra_info}
	For any $\alpha \in [0, 1]$ unknown to the learner, run \algParallel with time horizon $T$ and optimal expected reward $\mu_\star$ leads to the following regret upper bound:
	\begin{align}
	\sup_{\omega \in {\cal H}_T(\alpha)} R_T \leq C \, \left( \log T \right)^{2} T^{(1+ \alpha )/ 2}, \nonumber
	\end{align}
	where $C$ is a universal constant.
\end{restatable}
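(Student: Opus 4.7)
The plan is to exploit two facts: (i) by \cref{lm:subroutine_best}, one of the $p = \lceil \log T \rceil$ subroutines, namely the ``aligned'' one $\sr_{i^\star}$ with $\alpha_{i^\star} - 1/p < \alpha \leq \alpha_{i^\star}$, is guaranteed $\widetilde O(T^{(1+\alpha)/2})$ regret if run for the full horizon, and (ii) \algParallel equalizes empirical regrets across subroutines by always feeding the next $\Delta = \lceil \sqrt T \rceil$ samples to the currently-worst-performing one. So even though we do not know $i^\star$, the aligned subroutine will act as a ``yardstick'' forcing every other subroutine to have comparable empirical regret at the end.

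First I would upgrade \cref{lm:subroutine_best} to an ``anytime'' statement: since \moss on the random subset $S_{\alpha_{i^\star}}$ of size $\lceil 2T^{\alpha_{i^\star}}\log\sqrt T\rceil$ contains a best arm with probability at least $1 - 1/\sqrt T$, the \moss guarantee (regret $18\sqrt{|S|\,n}$) gives that for any $n \leq T$ the expected regret of $\sr_{i^\star}$ after $n$ samples is bounded by $\widetilde O(T^{(1+\alpha)/2})$. In particular, at the random stopping time $T_{i^\star, T}$, the expected cumulative regret incurred by $\sr_{i^\star}$ is $\widetilde O(T^{(1+\alpha)/2})$, and by an Azuma--Hoeffding bound on the $(1/4)$-sub-Gaussian increments $\mu_\star - X_{i^\star,\tau}$, the empirical regret $\widehat R_{i^\star, T}$ concentrates around this expectation within $O(\sqrt{T\log T})$ with high probability.

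Next I would run the balancing argument. Fix any subroutine $k \neq i^\star$ and let $t_k$ denote the last iteration boundary at which $k$ was selected (if $k$ was never selected then $T_{k,T} = 0$ and its contribution is zero). By the selection rule $\widehat R_{k, t_k} \leq \widehat R_{i^\star, t_k}$; during the subsequent $\Delta$ rounds allocated to $\sr_k$ its empirical regret grows by at most $\Delta$ in expectation (rewards are in $[0,1]$ in the sub-Gaussian sense used here), so
\begin{align}
\widehat R_{k, T} \;=\; \widehat R_{k, t_k+\Delta} \;\leq\; \widehat R_{i^\star, t_k} + \Delta \;\leq\; \widehat R_{i^\star, T} + \Delta + \text{concentration slack}. \nonumber
\end{align}
A union bound over the $p$ subroutines and the $O(\sqrt T)$ iteration endpoints controls the fluctuations of $\widehat R_{i^\star,\cdot}$ uniformly, costing only additional $\log T$ factors. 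Consequently every $\widehat R_{k,T}$ is bounded by $\widetilde O(T^{(1+\alpha)/2}) + O(\sqrt T)$.

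Finally, a second application of sub-Gaussian concentration transfers the empirical bound back to the expected regret $\E[R_k(T_{k,T})]$ incurred by each $\sr_k$, again with slack $O(\sqrt{T\log T})$. Summing over the $p = \lceil \log T \rceil$ subroutines yields the claimed $R_T \leq C(\log T)^2 T^{(1+\alpha)/2}$. The main obstacle is Step~3: empirical regret is not monotone and one must carefully argue, via a uniform concentration bound over all subroutines and all iteration boundaries, that the minimum-empirical-regret selection rule genuinely enforces balance on the \emph{expected} regrets. The discretization cost $\Delta = \lceil\sqrt T\rceil$ is chosen precisely so that it is dominated by $T^{(1+\alpha)/2}$, and the polylog factors from the grid of size $p$, from \cref{lm:subroutine_best}, and from the uniform concentration compound to the $(\log T)^2$ factor in the statement.
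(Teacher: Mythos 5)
Your proposal follows essentially the same route as the paper's proof: the aligned subroutine $\sr_{i^\star}$ serves as a yardstick via \cref{lm:subroutine_best} and the monotonicity of cumulative pseudo-regret, a martingale concentration bound with a union bound over subroutines and times relates empirical to expected regret, the minimum-empirical-regret selection rule plus the $\lceil\sqrt T\rceil$ block length forces every subroutine's regret to within $\widetilde O(\sqrt T)$ of the yardstick's, and summing over the $\lceil\log T\rceil$ subroutines gives the $(\log T)^2$ factor. The proposal is correct and matches the paper's argument in all essential respects.
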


\section{Experiments}
\label{section:experiment}
We conduct three experiments to compare our algorithms with baselines. In \cref{sec:experiment_hardness}, we compare the performance of each algorithm on problems with varying hardness levels. We examine how the regret curve of each algorithm increases on synthetic and real-world datasets in \cref{sec:experiment_real_time} and \cref{sec:experiment_real_data}, respectively.

We first introduce the nomenclature of the algorithms. We use \moss to denote the standard \moss algorithm; and \mossOracle to denote \cref{alg:moss_subroutine} with \emph{known} $\alpha$. \quantile represents the algorithm (QRM2) proposed by \cite{chaudhuri2018quantile} to minimize the regret with respect to the $(1-\rho)$-th quantile of means among arms, without the knowledge of $\rho$. One could easily transfer \quantile to our settings with top-$\rho$ fraction of arms treated as best arms. As suggested in \cite{chaudhuri2018quantile}, we reuse the statistics obtained in previous iterations of \quantile to improve its sample efficiency. We use \mossPlus to represent the vanilla version of \cref{alg:regret_multi}; and use \mossPlusEmp to represent an empirical version such that: (1) \mossPlusEmp reuse statistics obtained in previous round, as did in \quantile; and (2) instead of selecting $K_i$ real arms uniformly at random at the $i$-th iteration, \mossPlusEmp selects $K_i$ arms with the highest empirical mean for $i > 1$. We choose $\beta = 0.5$ for \mossPlus and \mossPlusEmp in all experiments.\footnote{Increasing $\beta$ generally leads to worse performance on problems with small $\alpha$ but better performance on those with larger $\alpha$.} All results are averaged over 100 experiments. Shaded area represents 0.5 standard deviation for each algorithm.

\subsection{Adaptivity to hardness level}
\label{sec:experiment_hardness}

\begin{figure}[h]%
	\centering
	\subfigure[]{{\includegraphics[width=.5\textwidth]{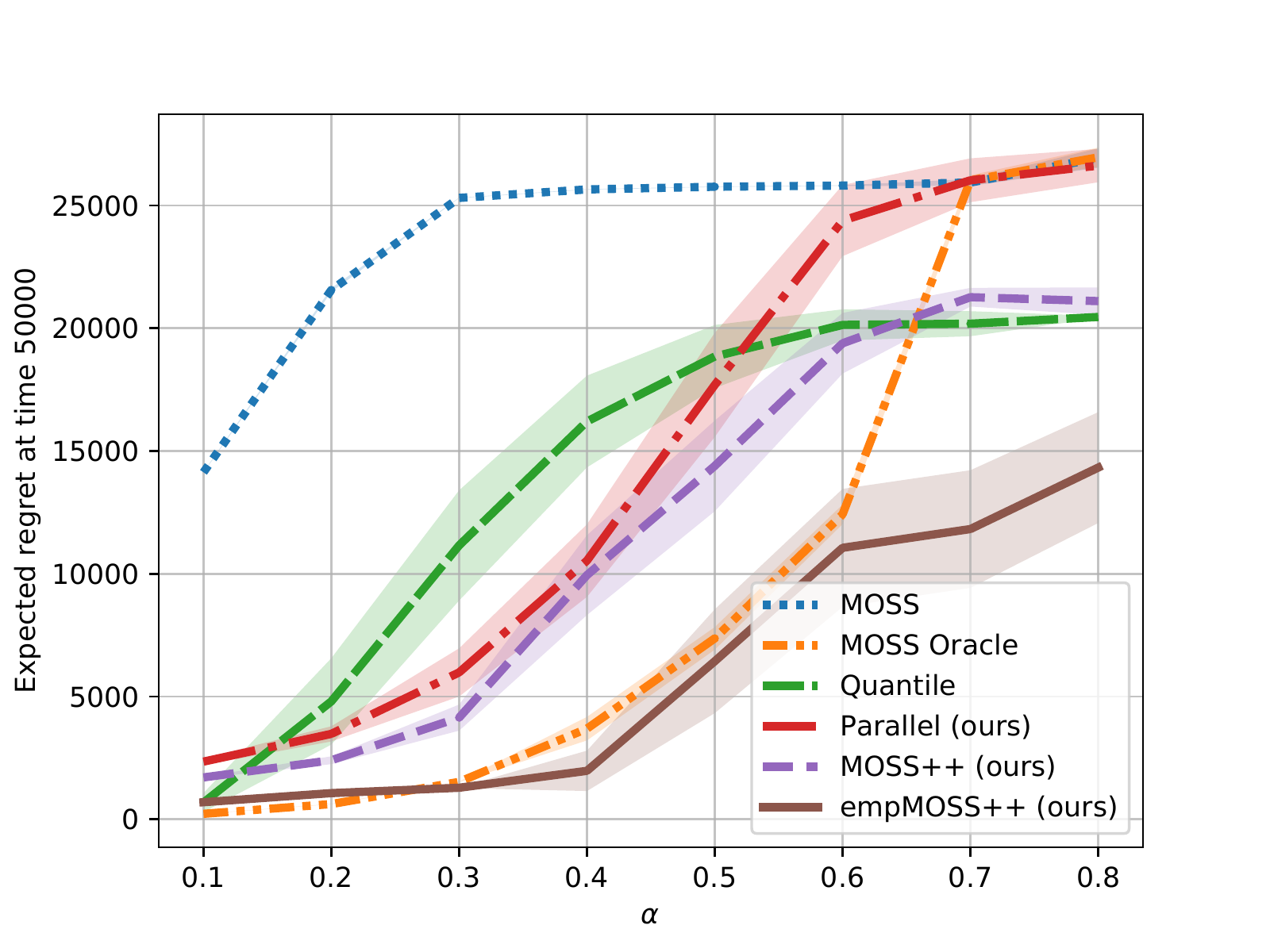} }}%
	\subfigure[]{{\includegraphics[width=.5\textwidth]{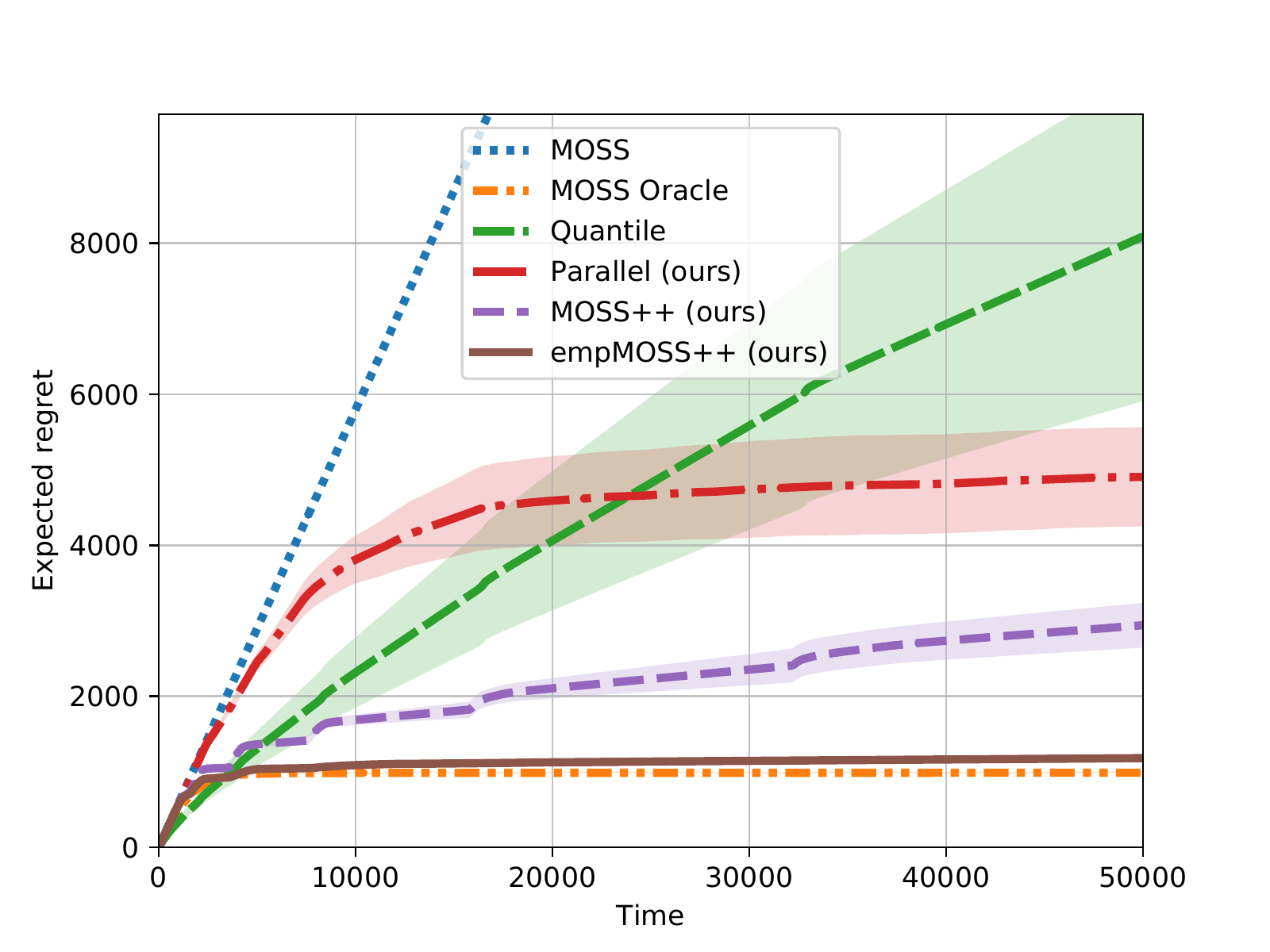} }}%
	\caption{(a) Comparison with varying hardness levels (b) Regret curve comparison with $\alpha = 0.25$.}%
	\label{fig:synthetic}
\end{figure}


We compare our algorithms with baselines on regret minimization problems with different hardness levels. For this experiment, we generate best arms with expected reward 0.9 and sub-optimal arms with expected reward evenly distributed among $\{0.1, 0.2, 0.3, 0.4, 0.5\}$. All arms follow Bernoulli distribution. We set the time horizon to $T = 50000$ and consider the total number of arms $n = 20000$. We vary $\alpha$ from 0.1 to 0.9 (with interval 0.1) to control the number of best arms $m = \lceil n/ 2T^\alpha \rceil$ and thus the hardness level. In \cref{fig:synthetic}(a), the regret of any algorithm gets larger as $\alpha$ increases, which is expected. \moss does not provide satisfying performance due to the large action space and the relatively small time horizon. Although implemented in an anytime fashion, \quantile could be roughly viewed as an algorithm that runs \moss on a subset selected uniformly at random with cardinality $T^{0.347}$. \quantile displays good performance when $\alpha = 0.1$, but suffers regret much worse than \mossPlus and \mossPlusEmp when $\alpha$ gets larger. Note that the regret curve of \quantile gets flattened at $20000$ is expected: it simply learns the best sub-optimal arm and suffers a regret $50000 \times (0.9 -0.5)$. Although \algParallel enjoys near minimax optimal regret, the regret it suffers from is the summation of 11 subroutines, which hurts its empirical performance. \mossPlusEmp achieves performance comparable to \mossOracle when $\alpha$ is small, and achieve the best empirical performance when $\alpha \geq 0.3$. When $\alpha \geq 0.7$, \mossOracle needs to explore most/all of the arms to statistically guarantee the finding of at least one best arm, which hurts its empirical performance.

\subsection{Regret curve comparison}
\label{sec:experiment_real_time}

We compare how the regret curve of each algorithm increases in \cref{fig:synthetic}(b). We consider the same regret minimization configurations as described in \cref{sec:experiment_hardness} with $\alpha = 0.25$. \mossPlusEmp, \mossPlus and \algParallel all outperform \quantile with \mossPlusEmp achieving the performance closest to \mossOracle. \mossOracle, \algParallel and \mossPlusEmp have flattened their regret curve indicating they could confidently recommend the best arm. The regret curves of \mossPlus and \quantile do not flat as the random-sampling component in each of their iterations encourage them to explore new arms. Comparing to \mossPlus, \quantile keeps increasing its regret at a much faster rate and with a much larger variance, which empirically confirms the sub-optimality of their regret guarantees.

\subsection{Real-world dataset}
\label{sec:experiment_real_data}

We also compare all algorithms in a realistic setting of recommending funny captions to website visitors. We use a real-world dataset from the \emph{New Yorker Magazine} Cartoon Caption Contest\footnote{\url{https://www.newyorker.com/cartoons/contest}.}. The dataset of 1-3 star caption ratings/rewards for Contest 652 consists of $n=10025$ captions\footnote{Available online at \url{https://nextml.github.io/caption-contest-data}.}.  We use the ratings to compute Bernoulli reward distributions for each caption as follows. The mean of each caption/arm $i$ is calculated as the percentage $p_i$ of its ratings that were funny or somewhat funny (i.e., 2 or 3 stars). We normalize each $p_i$ with the best one and then threshold each: if $p_i\geq 0.8$, then put $p_i=1$; otherwise leave $p_i$ unaltered.  This produces a set of $m=54$ best arms  with rewards 1 and all other $9971$ arms with rewards among $[0, 0.8]$. We set $T=10^5$ and this results in a hardness level around $\alpha \approx 0.43$. 

\begin{wrapfigure}[18]{l}{0.5\textwidth}
    \vspace{-18pt}
    \centering
    \includegraphics[width=1\linewidth]{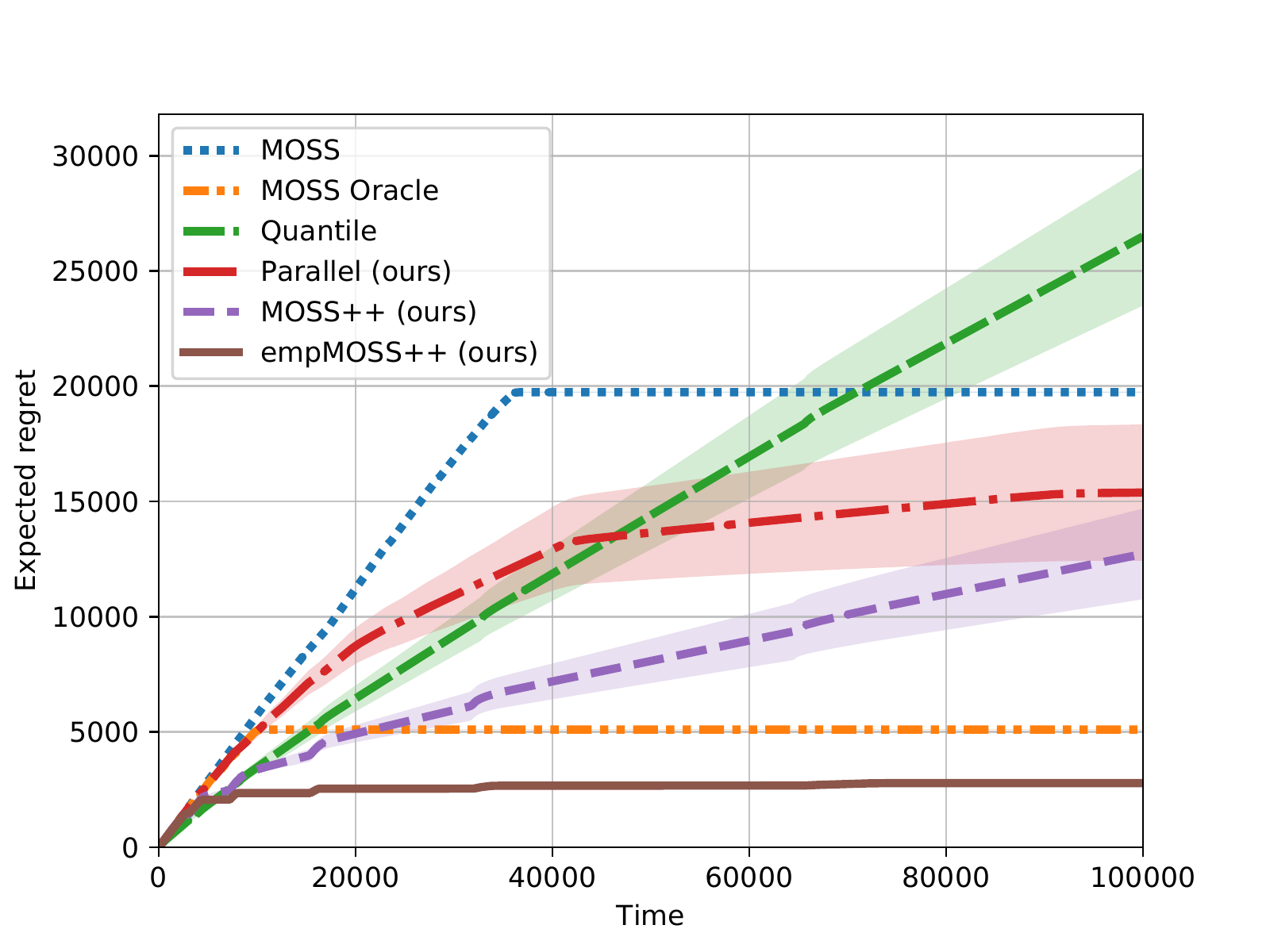}
    \caption{Regret comparison with real-world dataset}
    \label{fig:real_data}
\end{wrapfigure}

Using these Bernoulli reward models, we compare the performance of each algorithm, as shown in \cref{fig:real_data}. \moss, \mossOracle, \algParallel and \mossPlusEmp have flattened their regret curve indicating they could confidently recommend the funny captions (i.e., best arms).  Although \moss could eventually identify a best arm in this problem, its cumulative regret is more than 7x of the regret achieved by \mossPlusEmp due to its initial exploration phase. The performance of \quantile is even worse, and its cumulative regret is more than 9x of the regret achieved by \mossPlusEmp. One surprising phenomenon is that \mossPlusEmp outperforms \mossOracle in this realistic setting. Our hypothesis is that \mossOracle is a little bit conservative and selects an initial set with cardinality too large. This experiment demonstrates the effectiveness of \mossPlusEmp and \mossPlus in modern applications of bandit algorithm with large action space and limited time horizon.

\section{Conclusion}
\label{section:conclusion}

We study a regret minimization problem with large action space but limited time horizon, which captures many modern applications of bandit algorithms. Depending on the number of best/near-optimal arms, we encode the hardness level, in terms of minimax regret achievable, of the given regret minimization problem into a single parameter $\alpha$, and we design algorithms that could adapt to this \emph{unknown} hardness level. Our first algorithm \mossPlus takes a user-specified parameter $\beta$ as input and provides guarantees as long as $\alpha < \beta$; our lower bound further indicates the rate function achieved by \mossPlus is Pareto optimal. Although no algorithm can achieve near minimax optimal regret over all $\alpha$ simultaneously, as demonstrated by our lower bound, we overcome this limitation with an (often) easily-obtained extra information and propose \algParallel that is near-optimal for all settings. Inspired by \mossPlus, We also propose \mossPlusEmp with excellent empirical performance. Experiments on both synthetic and real-world datasets demonstrate the efficiency of our algorithms over the previous state-of-the-art.

\section*{Broader Impact}

This paper provides efficient algorithms that work well in modern applications of bandit algorithms with large action space but limited time horizon. We make \emph{minimal} assumption about the setting, and our algorithms can automatically adapt to \emph{unknown} hardness levels. Worst-case regret guarantees are provided for our algorithms; we also show \mossPlus is Pareto optimal and \algParallel is minimax optimal, up to polylog factors. \mossPlusEmp is provided as a practical version of \mossPlus with excellent empirical performance. Our algorithms are particularly useful in areas such as e-commence and movie/content recommendation, where the action space is enormous but possibly contains multiple best/satisfactory actions. If deployed, our algorithms could automatically adapt to the hardness level of the recommendation task and benefit both service-providers and customers through efficiently delivering satisfactory content. One possible negative outcome is that items recommended to a \emph{specific} user/customer might only come from a subset of the action space. However, this is \emph{unavoidable} when the number of items/actions exceeds the allowed time horizon. In fact, one should notice that all items/actions will be selected with essentially the same probability, thanks to the incorporation of random selection processes in our algorithms. Our algorithms will \emph{not} leverage/create biases due to the same reason. Overall, we believe this paper's contribution will have a net positive impact.

\begin{ack}
The authors would like to thank anonymous reviewers for their comments and suggestions.  This work was partially supported by NSF grant no.\ 1934612.


\end{ack}

\bibliographystyle{plain}
\bibliography{multi_best.bib}

\begin{thebibliography}{10}

\bibitem{agrawal1995continuum}
Rajeev Agrawal.
\newblock The continuum-armed bandit problem.
\newblock {\em SIAM journal on control and optimization}, 33(6):1926--1951,
  1995.

\bibitem{audibert2009minimax}
Jean-Yves Audibert and S{\'e}bastien Bubeck.
\newblock Minimax policies for adversarial and stochastic bandits.
\newblock In {\em COLT}, pages 217--226, 2009.

\bibitem{auer2007improved}
Peter Auer, Ronald Ortner, and Csaba Szepesv{\'a}ri.
\newblock Improved rates for the stochastic continuum-armed bandit problem.
\newblock In {\em International Conference on Computational Learning Theory},
  pages 454--468. Springer, 2007.

\bibitem{aziz2018pure}
Maryam Aziz, Jesse Anderton, Emilie Kaufmann, and Javed Aslam.
\newblock Pure exploration in infinitely-armed bandit models with
  fixed-confidence.
\newblock {\em arXiv preprint arXiv:1803.04665}, 2018.

\bibitem{bartlett2018simple}
Peter~L Bartlett, Victor Gabillon, and Michal Valko.
\newblock A simple parameter-free and adaptive approach to optimization under a
  minimal local smoothness assumption.
\newblock {\em arXiv preprint arXiv:1810.00997}, 2018.

\bibitem{berry1997bandit}
Donald~A Berry, Robert~W Chen, Alan Zame, David~C Heath, and Larry~A Shepp.
\newblock Bandit problems with infinitely many arms.
\newblock {\em The Annals of Statistics}, pages 2103--2116, 1997.

\bibitem{birge1997model}
Lucien Birg{\'e} and Pascal Massart.
\newblock From model selection to adaptive estimation.
\newblock In {\em Festschrift for lucien le cam}, pages 55--87. Springer, 1997.

\bibitem{bubeck2011x}
S{\'e}bastien Bubeck, R{\'e}mi Munos, Gilles Stoltz, and Csaba Szepesv{\'a}ri.
\newblock X-armed bandits.
\newblock {\em Journal of Machine Learning Research}, 12(May):1655--1695, 2011.

\bibitem{bubeck2011lipschitz}
S{\'e}bastien Bubeck, Gilles Stoltz, and Jia~Yuan Yu.
\newblock Lipschitz bandits without the lipschitz constant.
\newblock In {\em International Conference on Algorithmic Learning Theory},
  pages 144--158. Springer, 2011.

\bibitem{bull2015adaptive}
Adam~D Bull et~al.
\newblock Adaptive-treed bandits.
\newblock {\em Bernoulli}, 21(4):2289--2307, 2015.

\bibitem{cai2005adaptive}
T~Tony Cai, Mark~G Low, et~al.
\newblock On adaptive estimation of linear functionals.
\newblock {\em The Annals of Statistics}, 33(5):2311--2343, 2005.

\bibitem{cryptoeprint:2019:520}
Benjamin~M. Case, Colin Gallagher, and Shuhong Gao.
\newblock A note on sub-gaussian random variables.
\newblock Cryptology ePrint Archive, Report 2019/520, 2019.
\newblock \url{https://eprint.iacr.org/2019/520}.

\bibitem{chaudhuri2018quantile}
Arghya~Roy Chaudhuri and Shivaram Kalyanakrishnan.
\newblock Quantile-regret minimisation in infinitely many-armed bandits.
\newblock In {\em UAI}, pages 425--434, 2018.

\bibitem{deshpande2012linear}
Yash Deshpande and Andrea Montanari.
\newblock Linear bandits in high dimension and recommendation systems.
\newblock In {\em 2012 50th Annual Allerton Conference on Communication,
  Control, and Computing (Allerton)}, pages 1750--1754. IEEE, 2012.

\bibitem{garivier2018kl}
Aur{\'e}lien Garivier, H{\'e}di Hadiji, Pierre Menard, and Gilles Stoltz.
\newblock Kl-ucb-switch: optimal regret bounds for stochastic bandits from both
  a distribution-dependent and a distribution-free viewpoints.
\newblock {\em arXiv preprint arXiv:1805.05071}, 2018.

\bibitem{ghalme2020ballooning}
Ganesh Ghalme, Swapnil Dhamal, Shweta Jain, Sujit Gujar, and Y~Narahari.
\newblock Ballooning multi-armed bandits.
\newblock {\em arXiv preprint arXiv:2001.10055}, 2020.

\bibitem{grill2015black}
Jean-Bastien Grill, Michal Valko, and R{\'e}mi Munos.
\newblock Black-box optimization of noisy functions with unknown smoothness.
\newblock In {\em Advances in Neural Information Processing Systems}, pages
  667--675, 2015.

\bibitem{hadiji2019polynomial}
H{\'e}di Hadiji.
\newblock Polynomial cost of adaptation for x-armed bandits.
\newblock In {\em Advances in Neural Information Processing Systems}, pages
  1027--1036, 2019.

\bibitem{katz2019true}
Julian Katz-Samuels and Kevin Jamieson.
\newblock The true sample complexity of identifying good arms.
\newblock {\em arXiv preprint arXiv:1906.06594}, 2019.

\bibitem{kleinberg2008multi}
Robert Kleinberg, Aleksandrs Slivkins, and Eli Upfal.
\newblock Multi-armed bandits in metric spaces.
\newblock In {\em Proceedings of the fortieth annual ACM symposium on Theory of
  computing}, pages 681--690, 2008.

\bibitem{kleinberg2005nearly}
Robert~D Kleinberg.
\newblock Nearly tight bounds for the continuum-armed bandit problem.
\newblock In {\em Advances in Neural Information Processing Systems}, pages
  697--704, 2005.

\bibitem{lattimore2020bandit}
Tor Lattimore and Csaba Szepesv{\'a}ri.
\newblock {\em Bandit algorithms}.
\newblock Cambridge University Press, 2020.

\bibitem{lepskii1991problem}
OV~Lepskii.
\newblock On a problem of adaptive estimation in gaussian white noise.
\newblock {\em Theory of Probability \& Its Applications}, 35(3):454--466,
  1991.

\bibitem{locatelli2018adaptivity}
Andrea Locatelli and Alexandra Carpentier.
\newblock Adaptivity to smoothness in x-armed bandits.
\newblock In {\em Conference on Learning Theory}, pages 1463--1492, 2018.

\bibitem{russo2018satisficing}
Daniel Russo and Benjamin Van~Roy.
\newblock Satisficing in time-sensitive bandit learning.
\newblock {\em arXiv preprint arXiv:1803.02855}, 2018.

\bibitem{ryzhov2012knowledge}
Ilya~O Ryzhov, Warren~B Powell, and Peter~I Frazier.
\newblock The knowledge gradient algorithm for a general class of online
  learning problems.
\newblock {\em Operations Research}, 60(1):180--195, 2012.

\bibitem{shang2019general}
Xuedong Shang, Emilie Kaufmann, and Michal Valko.
\newblock General parallel optimization a without metric.
\newblock In {\em Algorithmic Learning Theory}, pages 762--788, 2019.

\bibitem{teytaud:inria-00173263}
Olivier Teytaud, Sylvain Gelly, and Mich{\`e}le Sebag.
\newblock {Anytime many-armed bandits}.
\newblock In {\em {CAP07}}, Grenoble, France, 2007.

\bibitem{valko2013stochastic}
Michal Valko, Alexandra Carpentier, and R{\'e}mi Munos.
\newblock Stochastic simultaneous optimistic optimization.
\newblock In {\em International Conference on Machine Learning}, pages 19--27,
  2013.

\bibitem{wainwright2019high}
Martin~J Wainwright.
\newblock {\em High-dimensional statistics: A non-asymptotic viewpoint},
  volume~48.
\newblock Cambridge University Press, 2019.

\bibitem{wang2009algorithms}
Yizao Wang, Jean-Yves Audibert, and R{\'e}mi Munos.
\newblock Algorithms for infinitely many-armed bandits.
\newblock In {\em Advances in Neural Information Processing Systems}, pages
  1729--1736, 2009.

\end{thebibliography}

\clearpage

\appendix

\section{Omitted proofs for \cref{section:adaptive}}

We introduce the notation $R_{T \vert {\cal F}} = T \cdot \mu_\star - \E [ \sum_{t=1}^T X_t \vert {\cal F}]$ for any $\sigma$-algebra. One should also notice that $\E[R_{T\vert {\cal F}}] = R_T$.

\subsection{Proof of \cref{thm:regret_multi}}

\begin{lemma}
	\label{lm:prob_replacement}
	For an instance with $n$ total arms and $m$ best arms, and for a subset $S$ selected uniformly at random with cardinality $k$, the probability that none of the best arms are selected in $S$ is upper bounded by $\exp(- mk / n)$. 

\end{lemma}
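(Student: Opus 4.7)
The plan is to treat the uniformly random subset $S$ as drawn without replacement and compute the probability of missing all best arms as a hypergeometric probability, then bound each factor by a common quantity and invoke the standard inequality $1-x \leq e^{-x}$.

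First I would write the probability explicitly as
\begin{align}
\Prob(S \cap S_\star = \emptyset) = \frac{\binom{n-m}{k}}{\binom{n}{k}} = \prod_{i=0}^{k-1} \frac{n-m-i}{n-i}. \nonumber
\end{align}
The key observation is that each factor can be rewritten as $1 - \frac{m}{n-i}$, and since $n-i \leq n$ we have $\frac{m}{n-i} \geq \frac{m}{n}$, hence
\begin{align}
\frac{n-m-i}{n-i} \leq 1 - \frac{m}{n}. \nonumber
\end{align}
Multiplying this bound across all $k$ factors gives $\Prob(S \cap S_\star = \emptyset) \leq (1 - m/n)^k$.

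Finally, applying the elementary inequality $1-x \leq e^{-x}$ (valid for all $x \in \R$) with $x = m/n$ yields $(1-m/n)^k \leq \exp(-mk/n)$, which is the desired bound. There is no real obstacle here: the argument is a short chain of elementary inequalities, and the only subtlety worth flagging is that the ``uniformly at random'' selection is without replacement (consistent with the algorithm description), since otherwise one would directly have $(1-m/n)^k$ with equality rather than as an upper bound, and the conclusion would still hold by the same final step. No prior results from the paper are needed.
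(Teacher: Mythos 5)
Your proof is correct and follows essentially the same route as the paper's: both write the miss probability as the hypergeometric product $\prod_{i=0}^{k-1}\frac{n-m-i}{n-i}$, bound it by $(1-m/n)^k$ (you via $\frac{n-m-i}{n-i}=1-\frac{m}{n-i}\leq 1-\frac{m}{n}$, the paper via monotonicity of the factors in $i$, which is the same observation), and finish with $1-x\leq e^{-x}$. No issues.
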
{}
\begin{proof}
	Consider selecting $k$ items out of $n$ items \emph{without} replacement; and suppose there are $m$ target items. Let $E$ denote the event where none of the target items are selected, we then have 
	\begin{align}
	\Prob \left(E \right) & = \frac{\binom{n-m}{k}}{\binom{n}{k}} = \frac{ \frac{(n-m)!}{(n-m-k)! k!} }{\frac{n!}{(n-k)! k!}}\nonumber \\
	& = \frac{(n-m)!}{(n-m-k)!} \cdot \frac{(n-k)!}{n!} \nonumber \\
	& = \prod_{i=0}^{k-1} \frac{n-m-i}{n-i} \nonumber \\
	& \leq \left( \frac{n-m}{n} \right)^k \label{eq:prob_decreasing} \\
	& \leq \exp\left({- \frac{m}{n} \cdot k}\right), \label{eq:prob_exp}
	\end{align}
	where \cref{eq:prob_decreasing} comes from the fact that $\frac{n-m-i}{n-i}$ is decreasing in $i$; and \cref{eq:prob_exp} comes from the fact that $1-x \leq \exp(-x)$ for all $x \in \R$. 
\end{proof}{}

\regretMulti*

\begin{proof}

	Let $T_i = \sum_{j = 1}^{i} \Delta T_j$. We first notice that \cref{alg:regret_multi} is a valid algorithm in the sense that it selects an arm $A_t$ for any $t \in [T]$, i.e., it does not terminate before time $T$: the argument is clearly true if there exists $i \in [p]$ such that $\Delta T_i = T$; otherwise, we can show that 
	\begin{align*}
	    T_p  = \sum_{i=1}^p \Delta T_i = 2(2^{2p} - 1) \geq 2^{2p} \geq T,
	\end{align*}
	for all $\beta \in [1/2, 1]$. Now let ${\cal F}_{i-1}$ represents the information available at the beginning of iteration $i$, including the random selection process of generating $S_i$. We denote $R_{\Delta T_i} = \Delta T_i \cdot \mu_\star - \E [\sum_{t = T_{i-1}+1}^{T_i} X_t ]$ the expected cumulative regret at iteration $i$. Recall we use $R_{\Delta T_i \vert {\cal F}_{i-1}}$ to represent the expected regret conditional on ${\cal F}_{i-1}$ and have $\E[R_{\Delta T_i \vert {\cal F}_{i-1}}] = R_{\Delta T_i}$.
	
	When $\alpha \geq \beta$, one could see that \cref{thm:regret_multi} trivially holds as $T^{1 + \alpha - \beta} \geq T$. In the following, we only consider the case when $\alpha < \beta$.
	
	Recall $\mu_{S_i} = \max_{\nu \in S_i} \{\E_{X \sim \nu}[X]\}$. Applying \cref{eq:regret_decomposition} on $R_{\Delta T_i \vert {\cal F}_{i-1}}$ leads to 
	\begin{align}
	\label{eq:regret_decomposition_internal}
	R_{\Delta T_i\vert {\cal F}_{i-1}} = \Delta T_i \cdot  \left(\mu_\star - \mu_{S_i } \right) + \left( \Delta T_i \cdot \mu_{S_i} - \E \left[ \sum_{t=T_{i-1}+1}^{T_i} \mu_{A_t} \, \bigg\vert \, {\cal F}_{i-1} \right] \right),
	\end{align}
	where, by a slightly abuse of notations, we use $\mu_{A_t}$ to refer to the mean of arm $A_t \in S_i$, which could also be the mean of a virtual arm constructed in one of the previous iterations.

	We first consider the learning error for any iteration $i \in [p]$. Although $\mu_{S_i}$ is random, it is fixed at time $T_{i-1}+1$ \cite{hadiji2019polynomial}. Since \moss restarts at each iteration, conditioning on the information available at the beginning of the $i$-th iteration, i.e., ${\cal F}_{i-1}$, and apply the regret bound for \moss, we have :
	\begin{align}
\Delta T_i \cdot \mu_{S_i} - \E \left[ \sum_{t=T_{i-1}+1}^{T_i} \mu_{A_t} \, \bigg\vert \, {\cal F}_{i-1} \right]& \leq 18 \sqrt{|S_i| \Delta T_i} \label{eq:regret_learning_0} \\
	& = 18 \sqrt{( K_i + i - 1 ) \Delta T_i} \nonumber \\
	& \leq 18 \sqrt{p K_i \Delta T_i} \label{eq:regret_learning_1}\\
	& \leq 18\sqrt{p \cdot 2^{2p+2}} \nonumber \\
	& = 36 \sqrt{p } \cdot 2^p \nonumber \\
	& \leq 102 \, (\log_2 T)^{1/2} \cdot T^\beta, \label{eq:regret_learning}
	\end{align}
	where \cref{eq:regret_learning_0} comes from the guarantee of \moss and the fact that $|S_i| = K_i + (i-1) \leq \Delta T_i$; \cref{eq:regret_learning_1} comes from $i \leq p$ and $a+b-1 \leq ab$ for positive integers $a$ and $b$; \cref{eq:regret_learning} comes from the fact that $p = \lceil \log_2(T^\beta) \rceil \leq \log_2(T^\beta)  + 1$ and some trivial boundings for the constant such as $\log_2(T^\beta) + 1 \leq 2 \log_2 T$.
	
	Taking expectation over all randomness on \cref{eq:regret_decomposition_internal}, we obtain
	\begin{align}
	\label{eq:regret_decomposition_internal_expectation}
	R_{\Delta T_i}  \leq  \Delta T_i \cdot \E \left[ (\mu_\star - \mu_{S_i}) \right] + 102 \, (\log_2 T)^{1/2} \cdot T^\beta.
	\end{align}

	Now, we only need to consider the first term, i.e., the expected approximation error over the $i$-th iteration. Let $E_i$ denote the event that none of the best arms, among regular arms, is selected in $S_i$, according to \cref{lm:prob_replacement}, we further have
	\begin{align}
	\Delta T_i \cdot \E \left[ (\mu_\star - \mu_{S_i}) \right] & \leq \Delta T_i \cdot \left( 0 \cdot \Prob(\neg E_i) + 1 \cdot \Prob(E_i) \right) \label{eq:regret_approximation_expectation_1} \\
	& \leq \Delta T_i \cdot \exp(- K_i/(2T^\alpha)), \label{eq:regret_approximation_expectation}
	\end{align}
	where we use the fact the $\mu_i \in [0,1]$ in \cref{eq:regret_approximation_expectation_1}; and directly plug $n/m \leq  2T^\alpha$ into \cref{eq:prob_exp} to get \cref{eq:regret_approximation_expectation}.
	
	Let $i_\star$ be the largest integer, if exists, such that $K_{i_\star} \geq 2T^\alpha \log \sqrt{T}$, we then have that, for any $i \leq i_\star$,
	\begin{align}
	\Delta T_i \cdot \E \left[ (\mu_\star - \mu_{S_i}) \right] \leq \Delta T_i/ \sqrt{T} \leq T / \sqrt{T} \leq \sqrt{T}. \label{eq:regret_approximation_i0}
	\end{align}
	Note that this choice of $i_\star$ indicates $T^\alpha \log T \leq K_{i_\star} < 2 T^\alpha \log T$.
	
	If we have $K_1 < 2T^{\alpha} \log \sqrt{T}$, we then set $i_\star = 1$. Since $K_1 = 2^{p+1} =  2^{\lceil \log_2 T^\beta \rceil + 1} \geq 2 T^{\beta} > 2 T^\alpha$, we then have 
	\begin{align}
	\Delta T_1 \cdot \E \left[ (\mu_\star - \mu_{S_1}) \right] \leq \Delta T_1 \exp(-1) \leq 2^{p+1} \exp(-1) < 2 T^\beta. \label{eq:regret_approximation_i0_1}
	\end{align}

	Combining \cref{eq:regret_decomposition_internal_expectation} with \cref{eq:regret_approximation_i0} or \cref{eq:regret_approximation_i0_1}, we have for any $i \leq i_\star$, and in particular for $i = i_\star$,
	\begin{align}
	 R_{\Delta T_{i}} & \leq \max\{\sqrt{T}, 2T^\beta\}+ 102 \, (\log_2 T)^{1/2} \cdot T^\beta \nonumber \\
	& \leq 104 \, (\log_2 T)^{1/2} \cdot T^\beta. \label{eq:regret_i0}
	\end{align}
	
	In the case when $\Delta T_{i_\star} = \min \{2^{p+i}, T \}=T$, we know that \mossPlus will in fact stop at a time step no larger than $T_{i_\star}$ (since the allowed time horizon is $T$), and incur no regret in iterations $i \geq i_\star$. In the following, we only consider the case $\Delta T_{i_\star} = 2^{p+i}$, which leads to
	\begin{align}
	\Delta T_{i_\star} = \frac{2^{2p + 2}}{K_{i_\star}} > \frac{2^{2p + 1}}{T^\alpha \log T}, \label{eq:Ti0}
	\end{align}
	where \cref{eq:Ti0} comes from the fact that $ K_{i_\star} < \max \{ 2 T^\alpha \log T, 2T^\alpha \log \sqrt{T}\} = 2T^\alpha \log T$ by definition of $i_\star$. 
	
	We now analysis the expected approximation error for iteration $i > i_\star$. Since the sampling information during $i_\star$-th iteration is summarized in the virtual mixture-arm $\widetilde{\nu}_{i_\star}$, and being added to all $S_i$ for all $i > i_\star$. Let $\widetilde{\mu}_{i_\star} = \E_{X \sim \widetilde{\nu}_{i_\star}}[X]$ denote the expected reward of sampling according to the virtual mixture-arm $\widetilde{\nu}_{i_\star}$. For any $i > i_\star$, we then have 
	\begin{align}
	\Delta T_i \cdot \E \left[ (\mu_\star - \mu_{S_i}) \right] & \leq \Delta T_i \cdot (\mu_\star - \E[\widetilde{\mu}_{i_\star}]) \nonumber  \\
	& = \frac{\Delta T_i}{\Delta T_{i_\star}}\cdot \left( \Delta T_{i_\star} \cdot (\mu_\star - \E[\widetilde{\mu}_{i_\star}]) \right) \nonumber \\
	& = \frac{\Delta T_i}{\Delta T_{i_\star}}\cdot \left( \Delta T_{i_\star} \cdot \mu_\star - \sum_{t = T_{i_\star -1}+1}^{T_{i_\star}} \E[{\mu}_{A_t}] \right) \nonumber \\
	& = \frac{\Delta T_i}{\Delta T_{i_\star}}\cdot  R_{\Delta T_{i_\star}}\nonumber \\
	& < \frac{\Delta T_i}{\frac{2^{2p+1}}{T^\alpha  \log T}} \cdot 104 \, (\log_2 T)^{1/2} \cdot T^\beta \nonumber \\
	& \leq \frac{T^{1 + \alpha + \beta}}{{2^{2p}}} \cdot 57 \, (\log_2 T)^{3/2}  \label{eq:regret_approximation_important} \\
	& \leq 57 \, (\log_2 T)^{3/2} \cdot T^{1  + \alpha - \beta}, \label{eq:regret_approximation}
	\end{align}
	where \cref{eq:regret_approximation_important} comes from the fact that $\Delta T_i \leq T$ and some rewriting; \cref{eq:regret_approximation} comes from the fact that $p = \lceil \log_2 T^{\beta} \rceil \geq \log_2 T^{\beta}$.

Combining \cref{eq:regret_approximation} and \cref{eq:regret_decomposition_internal_expectation} for cases when $i > i_\star$ (or the corner case algorithm stops before $T_{i_\star}$ and incur no regret when $\Delta T_{i_\star} = T$), together with \cref{eq:regret_i0} for cases when $i \leq i_\star$, we have $\forall i \in [p]$
\begin{align}
	R_{\Delta T_i}  \leq 159 \, (\log_2 T)^{3/2} \cdot  T^{\max \{\beta, 1+\alpha - \beta \}}, \nonumber 
\end{align}
where the constant $159$ simply comes from $57 + 102$.

Since the cumulative pseudo regret is non-decreasing in $t$, we have
\begin{align}
	R_{T} & \leq \sum_{i=1}^p R_{\Delta T_i} \nonumber \\
	& \leq 159 \, p (\log_2 T)^{3/2} \cdot T^{\max \{\beta, 1+\alpha - \beta \}} \nonumber   \\
	& \leq 159 \, (\log_2 T + 1) \cdot (\log_2 T)^{3/2} \cdot T^{\max \{\beta, 1+\alpha - \beta \}} \label{eq:total_regret_p}   \\
	& \leq 160 \, (\log_2 T)^{5/2} \cdot T^{\max \{\beta, 1+\alpha - \beta \}} \nonumber \\
	& = \widetilde{O} \left( T^{\max \{\beta, 1+\alpha - \beta \} } \right),\nonumber
\end{align}
where \cref{eq:total_regret_p} comes from the fact that $p = \lceil \log_2(T^\beta) \rceil \leq \log_2(T^\beta)  + 1 \leq \log_2 T  + 1$. Our results follows after noticing $R_T \leq T$ is a trivial upper bound.
\end{proof}

\subsection{Anytime version}
\label{appendix:anytime_multi}

\begin{algorithm}[H]
	\caption{Anytime version of \mossPlus}
	\label{alg:regret_multi_anytime} 
	\renewcommand{\algorithmicrequire}{\textbf{Input:}}
	\renewcommand{\algorithmicensure}{\textbf{Output:}}
	\begin{algorithmic}[1]
		\REQUIRE User specified parameter $\beta \in [1/2, 1]$.
		\FOR {$i = 0, 1, \dots$}
		\STATE Run \cref{alg:regret_multi} with parameter $\beta$ for $2^i$ rounds (note that we will set $p = \lceil \log_2 2^{i\beta} \rceil =\lceil i\beta \rceil$).
		\ENDFOR 
	\end{algorithmic}
\end{algorithm}

\begin{restatable}{corollary}{regretMultiAnytime}
	\label{corollary:regret_multi_anytime}
	For any unknown time horizon $T$, run \cref{alg:regret_multi_anytime} with an user-specified parameter $\beta \in [1/2, 1]$ leads to the following regret upper bound:
	\begin{align}
	\sup_{\nu \in {\cal H}_T(\alpha)} R_T \leq 924 \, (\log_2 T)^{5/2} \cdot T^{\min \{ \max\{ \beta, 1 + \alpha -\beta \}, 1\}} = \widetilde{O} \left(  T^{\min \{ \max\{ \beta, 1 + \alpha -\beta \}, 1\}} \right). \nonumber
	\end{align}
\end{restatable}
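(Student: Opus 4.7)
\textbf{Proof plan for \cref{corollary:regret_multi_anytime}.} The plan is a standard doubling-trick reduction to \cref{thm:regret_multi}. Let $J = \lfloor \log_2(T+1) \rfloor$ so that the anytime procedure covers the interval $[1,T]$ with at most $J+1 = O(\log T)$ completed or partially-completed epochs, where epoch $i$ has length $T_i = 2^i$. Since each epoch restarts \mossPlus with its own local time horizon $T_i$ and its own choice $p_i = \lceil i\beta \rceil$, the regret accumulated during $[1,T]$ is bounded by the sum of the per-epoch regrets, each of which is controlled by \cref{thm:regret_multi} applied with horizon $T_i$.

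The one conceptual point is that the hardness parameter depends on the local horizon. Let $\alpha_i = \psi(\mathcal{R}(T_i,n,m))$ denote the hardness of the same bandit instance \emph{as seen from epoch $i$}. A direct calculation from the definition $\psi(\mathcal{R}(T',n,m)) = \inf\{\alpha' \geq 0 : n/m \leq 2 T'^{\alpha'}\}$ gives the identity $T_i^{\alpha_i} = \max\{1,\, n/(2m)\} = T^{\alpha}$, since the right-hand side depends only on $n/m$. This single identity is the bridge that allows me to relate the effective hardness inside an early, short epoch to the global hardness $\alpha$.

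Applying \cref{thm:regret_multi} to epoch $i$ yields
\begin{align*}
R_{\mathrm{epoch}_i} \;\leq\; 160\,(\log_2 T_i)^{5/2}\, T_i^{\min\{\max\{\beta,\, 1+\alpha_i - \beta\},\,1\}}.
\end{align*}
I split into two cases according to which term realizes the max. If $\max\{\beta, 1+\alpha_i-\beta\} = \beta$, the contribution is at most $\widetilde O(T_i^{\beta})$, which sums geometrically across epochs to $\widetilde O(T^{\beta})$. Otherwise, using $T_i^{\alpha_i} = T^{\alpha}$ and $T_i \leq 2T$ together with $\beta \in [1/2,1]$ (so $1-\beta \geq 0$ and $T_i^{1-\beta} \leq (2T)^{1-\beta}$), I factor
\begin{align*}
T_i^{1+\alpha_i-\beta} \;=\; T_i^{\alpha_i} \cdot T_i^{1-\beta} \;\leq\; T^{\alpha} \cdot (2T)^{1-\beta} \;=\; O\bigl(T^{1+\alpha-\beta}\bigr),
\end{align*}
so each such epoch contributes $\widetilde O(T^{1+\alpha-\beta})$. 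Summing this bound over $O(\log T)$ epochs costs only an extra logarithmic factor, and combining the two cases gives the claimed exponent $\max\{\beta, 1+\alpha-\beta\}$ up to polylog factors; capping by the trivial bound $R_T \leq T$ introduces the outer $\min\{\cdot,1\}$.

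The main obstacle is tracking the polylog accumulation carefully to land at the stated constant $924$: the per-epoch bound carries a $(\log_2 T_i)^{5/2}$ factor, and summing over $J+1 = O(\log T)$ epochs yields an extra $\log T$ multiplicatively, producing the $(\log_2 T)^{5/2}$ dependence together with a mild constant inflation over the $160$ of \cref{thm:regret_multi}. No further ideas beyond bookkeeping are required, since the key analytic input is already the identity $T_i^{\alpha_i} = T^{\alpha}$, which reduces the anytime analysis to a geometric-series computation over epoch regret bounds.
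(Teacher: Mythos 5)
Your skeleton is the same as the paper's: doubling epochs of length $2^i$, apply \cref{thm:regret_multi} to each epoch with its local horizon, sum, and cap by $R_T\leq T$. The paper simply writes $R_{2^i}\leq 160\,(\log_2 2^i)^{5/2}(2^i)^{\gamma}$ with the \emph{global} exponent $\gamma=\min\{\max\{\beta,1+\alpha-\beta\},1\}$ and sums the geometric series $\sum_i (2^{\gamma})^i\leq \frac{2^\gamma}{\gamma\log 2}(2^\gamma)^{t_\star}=O(T^\gamma)$, which is exactly how it avoids any extra $\log T$ and lands on $924=160\cdot\frac{2^\gamma}{\gamma\log 2}$ (bounded using $\gamma\geq 1/2$). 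Your observation that the hardness parameter is horizon-dependent, with $T_i^{\alpha_i}=T^{\psi}\leq T^{\alpha}$, is a genuinely more careful treatment of a point the paper glosses over: since $\alpha_i\geq\alpha$, the per-epoch bound $(2^i)^{\gamma_i}$ that \cref{thm:regret_multi} literally delivers is \emph{larger} than the $(2^i)^{\gamma}$ the paper asserts, so your bridge identity is actually needed to make the reduction rigorous.

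The one place your write-up falls short of the stated bound is the Case-2 summation. After factoring $T_i^{1+\alpha_i-\beta}=T_i^{1-\beta}T^{\alpha}$, you bound every epoch by its worst-case value $O(T^{1+\alpha-\beta})$ and multiply by the number of epochs; that gives $(\log_2 T)^{5/2}\cdot O(\log T)=O((\log_2 T)^{7/2})$, not the claimed $(\log_2 T)^{5/2}$, and your remark that this still "produces the $(\log_2 T)^{5/2}$ dependence" is not substantiated. To recover the paper's polylog you should keep the geometric structure: $\sum_i T_i^{1-\beta}T^{\alpha}$ is itself a geometric series in $i$ (dominated by its last term up to a constant when $\beta<1$, with the $\beta=1$ case absorbed by the cap at $T_i$), or, more cleanly, follow the paper and verify directly that each epoch's bound can be written as $(2^{\gamma})^i$ times a uniform polylog before summing once. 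This is a quantitative bookkeeping gap only; the $\widetilde O\bigl(T^{\min\{\max\{\beta,1+\alpha-\beta\},1\}}\bigr)$ conclusion is correctly established by your argument.
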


\begin{proof}
	Let $t_\star$ be the smallest integer such that 
	\begin{align}
	\sum_{i=0}^{t_\star} 2^i = 2^{t_\star+1} - 1\geq T. \nonumber
	\end{align}
	We then only need to run \cref{alg:regret_multi} for at most $t_\star$ times. By the definition of $t_\star$, we also know that $2^{t_\star} \leq T$, which leads to $t_\star \leq \log_2 T$.
	
	Let $\gamma = \min\{\max\{ \beta, 1+\alpha - \beta \},1\}$. From \cref{thm:regret_multi} we know that the regret at $i\in [t_\star]$-th round, denoted as $R_{2^i}$, could be upper bounded by
	\begin{align}
	R_{2^i} \leq 160 \, (\log_2 2^i)^{5/2} \cdot (2^{i})^{\gamma} = 160 \, i^{5/2} \cdot (2^{\gamma})^i \leq 160 \, t_\star^{5/2} \cdot (2^{\gamma})^i  \leq 160 \, (\log_2 T)^{5/2} \cdot (2^{\gamma})^i. \nonumber 
	\end{align}
	For $i=0$, we have $R_{2^0} \leq 1 \leq 160 \, (\log_2 T)^{5/2} \cdot (2^{\gamma})^0$ as well as long as $T \geq 2$.
	
	Now for the unknown time horizon $T$, we could upper bound the regret by
	\begin{align}
	R_T & \leq \sum_{i=0}^{t_\star} R_{2^i} \nonumber \\
	& \leq 160 \, (\log_2 T)^{5/2} \cdot  \left( \sum_{i=0}^{t_\star} (2^{\gamma})^i\right) \nonumber \\
	& \leq  160 \, (\log_2 T)^{5/2} \cdot \int_{x=0}^{t_\star + 1} (2^{\gamma})^x dx \label{eq:anytime_integral}\\
	& = 160 \, (\log_2 T)^{5/2} \cdot \frac{1}{\log 2^\gamma} \cdot \left( (2^\gamma)^{t_\star + 1} - 1 \right) \nonumber \\
	& \leq \frac{2^\gamma}{\gamma \log 2} \, 160 \, (\log_2 T)^{5/2} \cdot T^\gamma \nonumber \\
	& \leq 924 \, (\log_2 T)^{5/2} \cdot T^\gamma, \label{eq:anytime_gamma}
	\end{align}
	where \cref{eq:anytime_integral} comes from upper bounding summation by integral; and \cref{eq:anytime_gamma} comes from a trivial bound on the constant when $1/2 \leq \gamma \leq 1$.
\end{proof}

\section{Omitted proofs for \cref{section:lower_bound_pareto_optimal}}

\subsection{Proof of \cref{thm:lower_bound}}

\lowerBound*

The proof of \cref{thm:lower_bound} is mainly inspired by the proofs of lower bounds in \cite{locatelli2018adaptivity, hadiji2019polynomial}. Before the start of the proof, we first state a generalized version of Pinsker's inequality developed in \cite{hadiji2019polynomial} (Lemma 3 therein).
\begin{lemma} 
	\label{lm:pinsker}
	Let $\Prob$ and $\Q$ be two probability measures. For any random variable $Z \in [0, 1]$, we have
	\begin{align}
	|\E_{\Prob}[Z] - \E_{\Q}[Z]| \leq \sqrt{{\kl(\Prob, \Q)}/{2}}. \nonumber
	\end{align}
\end{lemma}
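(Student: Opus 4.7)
The plan is to factor the inequality through the total variation distance
\[
\|\Prob - \Q\|_{\mathrm{TV}} := \sup_A |\Prob(A) - \Q(A)|
\]
and then invoke the classical Pinsker inequality. The proof therefore splits into two standard steps followed by a chaining.

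The first step is the elementary contraction
\[
|\E_\Prob[Z] - \E_\Q[Z]| \leq \|\Prob - \Q\|_{\mathrm{TV}}
\qquad \text{for every } Z \in [0,1].
\]
I would prove this via the Hahn--Jordan decomposition: write $\Prob - \Q = \mu^+ - \mu^-$, where $\mu^+$ and $\mu^-$ are mutually singular positive measures of equal total mass $\|\Prob - \Q\|_{\mathrm{TV}}$. Then
\[
\E_\Prob[Z] - \E_\Q[Z] \;=\; \int Z \, d\mu^+ - \int Z \, d\mu^- \;\leq\; \int 1 \, d\mu^+ - 0 \;=\; \|\Prob - \Q\|_{\mathrm{TV}},
\]
using $0 \leq Z \leq 1$ to upper bound the first integral by $\mu^+$ of the whole space and lower bound the second by $0$. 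The reverse inequality follows by applying the same argument with the roles of $\Prob$ and $\Q$ swapped (or, equivalently, with $Z$ replaced by $1 - Z \in [0,1]$).

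The second step is the classical Pinsker inequality
\[
\|\Prob - \Q\|_{\mathrm{TV}} \leq \sqrt{\kl(\Prob, \Q)/2},
\]
which is a textbook fact; the cleanest self-contained argument applies the data-processing inequality to the indicator $\mathbf{1}_A$ for each measurable $A$, reducing the claim to the one-dimensional Bernoulli bound $(p-q)^2 \leq \tfrac{1}{2}\, \kl(\mathrm{Ber}(p),\mathrm{Ber}(q))$, which in turn follows from a short convexity/Taylor expansion of $p \mapsto p \log(p/q) + (1-p)\log((1-p)/(1-q))$ around $p = q$. Chaining this with the contraction from the first step yields the lemma.

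There is no substantive obstacle here: both ingredients are standard. The only item that requires care is the sharp constant $1/2$ in Pinsker, which is consistent with the supremum definition of $\|\cdot\|_{\mathrm{TV}}$ used above; under the alternative convention $\|\Prob - \Q\|_{\mathrm{TV}} = \tfrac{1}{2}\int |d\Prob - d\Q|$ the intermediate constants shift, but the final bound $\sqrt{\kl(\Prob,\Q)/2}$ is unaffected.
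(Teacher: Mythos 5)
Your proof is correct. The paper itself does not prove this lemma---it imports it verbatim as Lemma~3 of the cited work of Hadiji and Stoltz---and your two-step argument (the contraction $|\E_{\Prob}[Z]-\E_{\Q}[Z]|\le \sup_A|\Prob(A)-\Q(A)|$ for $Z\in[0,1]$ via the Hahn--Jordan decomposition, chained with the classical Pinsker inequality) is exactly the standard way that result is established, with the constants handled consistently.
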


We consider $K+1$ bandit instances $\{  \underline{\nu}_i \}_{i=0}^K$ such that each bandit instance is a collection of $n$ distributions $\underline{\nu}_i = (\nu_{i1}, \nu_{i2}, \dots, \nu_{in})$ where each $\nu_{ij}$ represents a Gaussian distribution ${\cal N}(\mu_{ij}, 1/4)$ with $\mu_{ij} = \E [\nu_{ij}]$. For any given $0 \leq \alpha^\prime < \alpha \leq 1$ and time horizon $T$ large enough, we choose $n, m_0, m, K \in \N_{+}$ such that the following three conditions are satisfied:
\begin{enumerate}
	\item $n = m_0 + K m $;
	\item ${n}/{m_0} \leq 2T^{\alpha^\prime}$; 
	\item ${n}/{m} \leq 2T^{\alpha}$.
\end{enumerate}
\begin{proposition}
	\label{prop:construction}
	Integers satisfying the above three conditions exist. For instance, we could first fix $m \in \N_{+}$ and set $K = \lfloor T^\alpha \rfloor - 1 \geq 2$.\footnote{$K \geq 2$ holds for $T$ large enough.}  One could then set $m_0 = m\lceil T^{\alpha - \alpha^\prime} \rceil$ and $n = m_0 + Km$. 
\end{proposition}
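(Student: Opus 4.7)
The plan is straightforward verification: given the explicit construction $m \in \mathbb{N}_+$ arbitrary, $K = \lfloor T^\alpha \rfloor - 1$, $m_0 = m\lceil T^{\alpha-\alpha'}\rceil$, $n = m_0 + Km$, check each of the three listed conditions in turn. Condition (1) holds by definition of $n$, so the only real content is in showing that the ratios $n/m_0$ and $n/m$ are bounded as required, which reduces to bookkeeping with floor/ceiling inequalities.

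For condition (3), I would write
\[
\frac{n}{m} = \frac{m_0}{m} + K = \lceil T^{\alpha-\alpha'}\rceil + \lfloor T^{\alpha}\rfloor - 1 \leq T^{\alpha-\alpha'} + T^{\alpha},
\]
and then use $\alpha' \geq 0$ (so $T^{\alpha-\alpha'} \leq T^{\alpha}$) to conclude $n/m \leq 2T^{\alpha}$. For condition (2), I would write
\[
\frac{n}{m_0} = 1 + \frac{Km}{m_0} = 1 + \frac{\lfloor T^{\alpha}\rfloor - 1}{\lceil T^{\alpha-\alpha'}\rceil} \leq 1 + \frac{T^{\alpha}}{T^{\alpha-\alpha'}} = 1 + T^{\alpha'},
\]
which is bounded by $2T^{\alpha'}$ as long as $T^{\alpha'} \geq 1$, i.e., for $T \geq 1$ (trivially satisfied given the paper's blanket assumption $T \geq 2$).

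It remains to check that the chosen quantities are indeed positive integers and that the constraint $K \geq 2$ holds: $m$ is a fixed positive integer, $\lceil T^{\alpha-\alpha'}\rceil \geq 1$ so $m_0 \in \mathbb{N}_+$, and $K = \lfloor T^{\alpha}\rfloor - 1 \geq 2$ is guaranteed by the standing hypothesis $\lfloor T^{\alpha}\rfloor - 1 \geq \max\{T^{\alpha}/4, 2\}$ of \cref{thm:lower_bound}. There is really no obstacle here: the proposition is a sanity-check that the parameter regime used in the lower bound construction is nonempty, and the main job is just to line up the floor/ceiling estimates with the definitions of $\psi(\cdot)$ and $\mathcal{H}_T(\cdot)$. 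The entire proof should fit in a short paragraph.
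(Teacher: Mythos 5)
Your proposal is correct and follows essentially the same route as the paper's proof: verify condition (1) by construction, then bound $n/m_0$ and $n/m$ with the same floor/ceiling estimates ($\lceil x\rceil - 1 \leq x$, $\lfloor x\rfloor \leq x$) and the observation that $T^{\alpha-\alpha'} \leq T^{\alpha}$ and $T^{\alpha'}\geq 1$. Your explicit remarks on integrality and on where $K\geq 2$ comes from are a harmless addition the paper leaves implicit.
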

\begin{proof}
	We notice that the first condition holds by construction. We now show that the second and the third conditions hold. 
	
	For the second condition, we have 
	\begin{align}
	\frac{n}{m_0} & = \frac{m_0+Km }{m_0} \nonumber \\
	& = 1 + \frac{m  (\lfloor T^\alpha \rfloor - 1)}{m\left\lceil T^{\alpha - \alpha^\prime} \right\rceil}  \nonumber \\
	& \leq 1 + \frac{T^{\alpha}}{T^{\alpha - \alpha^\prime}}  \nonumber\\
	& \leq 2 T^{\alpha^\prime}. \nonumber
	\end{align}
	
	For the third condition, we have 
	\begin{align}
	\frac{n}{m} & = \frac{m_0 + Km}{m} \nonumber \\
	& = \frac{m \lceil T^{\alpha - \alpha^\prime} \rceil + (\lfloor T^\alpha \rfloor -1) m }{m} \nonumber\\
	& = \lceil T^{\alpha - \alpha^\prime} \rceil + \lfloor T^\alpha \rfloor -1 \nonumber \\
	& = \left(\lceil T^{\alpha - \alpha^\prime} \rceil - 1 \right)+ \lfloor T^\alpha \rfloor  \nonumber \\
	& \leq T^{\alpha - \alpha^\prime} + T^{\alpha} \nonumber\\
	& \leq 2 T^{\alpha}. 
	\end{align}
\end{proof}

Now we group $n$ distribution into $K+1$ different groups based on their indices: $S_0 = [m_0]$ and $S_i = [m_0 + i \cdot m]\backslash [m_0 + (i-1)\cdot m]$. Let $\Delta \in (0,1]$ be a parameter to be tuned later, we then define $K+1$ bandit instances $\underline{\nu}_i$ for $i \in  \{ 0\} \cup  [K]$ by assigning different values to their means $\mu_{ij}$:
\begin{align}
\mu_{ij} = \begin{cases} {\Delta}/{2} & \mbox{if } j \in S_0,\\
\Delta & \mbox{if } j \in S_i \mbox{ and } i\neq 0,\\
0 & \mbox{otherwise}.
\end{cases}
\label{eq:lower_bound_means}
\end{align}
 We could clearly see there are $m_0$ best arms in instance $\underline{\nu}_0$ and $m$ best arms in instances $\underline{\nu}_i, \forall i \in [K]$. Based on our construction in \cref{prop:construction}, we could then conclude that, with time horizon $T$, the regret minimization problem with respect to $\underline{\nu}_0$ is in ${\cal H}_T(\alpha^\prime)$; and similarly the regret minimization problem with respect to $\underline{\nu}_i$ is in ${\cal H}_T(\alpha), \forall i \in [K]$.

For any $t \in [T]$, the tuple of random variables $H_t = (A_1, X_1, \dots, A_t, X_t)$ is the outcome of an algorithm interacting with an bandit instance up to time $t$. Let $\Omega_t = ([n] \times \R)^t \subseteq \R^{2t}$ and ${\cal F}_t = \mathfrak{B}(\Omega_t)$; one could then define a measurable space $(\Omega_t, {\cal F}_t)$ for $H_t$. The random variables $A_1, X_1, \dots, A_t, X_t$ that make up the outcome are defined by their coordinate projections:
\begin{align}
A_t(a_1, x_1, \dots, a_t, x_t) = a_t \quad \mbox{and} \quad X_t(a_1, x_1, \dots, a_t, x_t) = x_t. \nonumber
\end{align}
For any fixed algorithm/policy $\pi$ and bandit instance $\underline{\nu}_i$, $\forall i \in \{0\}\cup [K]$, we are now constructing a probability measure $\Prob_{i, t}$ over $(\Omega_t, {\cal F}_t)$. Note that a policy $\pi$ is a sequence $(\pi_t)_{t=1}^T$, where $\pi_t$ is a probability kernel from $(\Omega_{t-1}, {\cal F}_{t-1})$ to $([n], 2^{[n]})$ with the first probability kernel $\pi_1(\omega, \cdot)$ being defined as the uniform measure over $([n], 2^{[n]})$, for any $\omega \in \Omega_0$. For each $i$, we define another probability kernel $p_{i, t}$ from $(\Omega_{t-1} \times [n], {\cal F}_{t-1} \otimes 2^{[n]})$ to $(\R, \mathfrak{B}(\R))$ that models the reward. Since the reward is distributed according to ${\cal N}(\mu_{i a_t}, 1/4 )$, we gives its explicit expression for any $B \in \mathfrak{B}(\R)$ as:
\begin{align}
p_{i, t} \big( (a_1, x_1, \dots, a_t), B\big) =  \bigintssss_B \sqrt{\frac{2}{\pi}} \exp \big( - 2 (x-\mu_{ia_t} ) \big) dx. \nonumber 
\end{align}
The probability measure over $\Prob_{i, t}$ over $(\Omega_t, {\cal F}_t)$ could then be define recursively as $\Prob_{i, t} = p_{i, t} \big( \pi_{t} \Prob_{i, t-1} \big)$. We use $\E_i$ to denote the expectation taken with respect to $\Prob_{i, T}$. Apply the same analysis as on page 21 of \cite{hadiji2019polynomial}, we obtain the following proposition on $\kl$ decomposition.
\begin{proposition}
	\label{prop:KL_decomposition}
	\begin{align}
	\kl \left( \Prob_{0, T}, \Prob_{i, T} \right) = \E_{0} \left[ \sum_{t=1}^T \kl\left( {\cal N}(\mu_{0 A_t}, 1/4), {\cal N} \left( \mu_{i A_t}, 1/4 \right) \right) \right]. \nonumber
	\end{align}
\end{proposition}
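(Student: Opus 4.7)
The plan is to apply the standard divergence decomposition lemma for bandits to the recursively defined product measures $\Prob_{0,T}$ and $\Prob_{i,T}$. The key structural observation is that both measures are built from the \emph{same} policy $\pi = (\pi_t)_{t=1}^T$ but \emph{different} reward kernels, so in the Radon-Nikodym derivative $d\Prob_{0,T}/d\Prob_{i,T}$ all action-selection factors will cancel pairwise and only ratios of reward densities will remain.

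First, I would unfold the recursion $\Prob_{i,t} = p_{i,t}(\pi_t \Prob_{i,t-1})$ to express the density of $\Prob_{i,T}$ on $(\Omega_T, {\cal F}_T)$ as a product of conditional kernels. Since all reward distributions are Gaussian with a common variance $1/4$, they are mutually absolutely continuous with respect to Lebesgue measure, and the policy kernels $\pi_t$ share counting measure on $[n]$ as a common dominating measure. Hence the Radon-Nikodym derivative $d\Prob_{0,T}/d\Prob_{i,T}(H_T)$ exists and reduces to $\prod_{t=1}^T p_{0,t}(X_t \mid H_{t-1}, A_t) / p_{i,t}(X_t \mid H_{t-1}, A_t)$ after cancellation of the $\pi_t$ factors, which are identical in numerator and denominator.

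Second, I would take logarithms, apply $\E_{0}[\cdot]$, and interchange the (finite) sum with the expectation to write $\kl(\Prob_{0,T}, \Prob_{i,T}) = \sum_{t=1}^T \E_{0}[\log(p_{0,t}/p_{i,t})]$. For each fixed $t$ I would then condition on $(H_{t-1}, A_t)$ and invoke the tower property: under $\Prob_{0,T}$ the conditional law of $X_t$ given $(H_{t-1}, A_t)$ is exactly ${\cal N}(\mu_{0 A_t}, 1/4)$, so the inner conditional expectation equals $\kl({\cal N}(\mu_{0 A_t}, 1/4), {\cal N}(\mu_{i A_t}, 1/4))$, a measurable function of $A_t$ alone. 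Pulling the outer expectation back across the sum yields exactly the claimed identity.

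I do not expect any serious obstacle here, since the whole argument is formal bookkeeping with Radon-Nikodym derivatives and mirrors the derivation on page 21 of \cite{hadiji2019polynomial} which the authors explicitly cite. The only subtlety is tracking measurability carefully when applying the tower property, and verifying that each conditional density in the product is well defined against the chosen reference measures; both points are immediate given the Gaussian reward model and the explicit form of the policy kernels $\pi_t$.
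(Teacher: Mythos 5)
Your proposal is correct and follows essentially the same route as the paper, which does not spell out a proof but defers to the standard divergence-decomposition (chain-rule) analysis on page 21 of \cite{hadiji2019polynomial}: unfold the recursive product structure of $\Prob_{0,T}$ and $\Prob_{i,T}$, cancel the common policy kernels $\pi_t$ in the Radon--Nikodym derivative, and apply the tower property so that each term reduces to the Gaussian KL divergence evaluated at $A_t$. No gaps; the measurability and absolute-continuity points you flag are indeed immediate here.
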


With respect to notations and constructions described above, we now prove \cref{thm:lower_bound}.

\begin{proof}(\cref{thm:lower_bound})
	Let $N_{S_i}(T) = \sum_{t=1}^T \mathds{1}\left( A_t \in S_i\right)$ denote the number of times the algorithm $\pi$ selects an arm in $S_i$ up to time $T$. Let $R_{i, T}$ denote the expected (pseudo) regret achieved by the algorithm $\pi$ interacting with the bandit instance $\underline{\nu}_i$. Based on the construction of bandit instance in \cref{eq:lower_bound_means}, we have 
	\begin{align}
	R_{0, T} \geq \frac{\Delta}{2} \sum_{i=1}^K \E_{0} \left[ N_{S_i}(T) \right], \label{eq:regret_0}
	\end{align}
	and $\forall i \in [K]$,
	\begin{align}
	R_{i, T} \geq \frac{\Delta}{2} \left( T - \E_{i} [N_{S_i}(T)] \right) = \frac{T\Delta}{2} \left( 1- \frac{\E_{i} [N_{S_i}(T)]}{T} \right). \label{eq:regret_i}
	\end{align}
	According to \cref{prop:KL_decomposition} and the calculation of $\kl$-divergence between two Gaussian distributions, we further have
	\begin{align}
	\kl(\Prob_{0, T}, \Prob_{i, T}) & = \E_{0} \left[ \sum_{t=1}^T \kl\left( {\cal N}(\mu_{0 A_t}, 1/4), {\cal N} \left( \mu_{i A_t}, 1/4 \right) \right) \right] \nonumber\\
	& = \E_{0} \left[ \sum_{t=1}^T 2 \left( \mu_{0 A_t} - \mu_{i A_t} \right)^2 \right] \nonumber \\
	& = 2 \E_{0} \left[ N_{S_i}(T) \right] \Delta^2, \label{eq:kl_difference}
	\end{align}
	where \cref{eq:kl_difference} comes from the fact that $\mu_{0j}$ and $\mu_{ij}$ only differs for $j \in S_i$ and the difference is exactly $\Delta$.
	
	We now consider the average regret over $i \in [K]$.
	\begin{align}
	\frac{1}{K} \sum_{i=1}^K R_{i, T} &  = \frac{T \Delta}{2} \left(1 - \frac{1}{K} \sum_{i=1}^K \frac{\E_i [N_{S_i}(T)]}{T} \right) \nonumber \\
	& \geq \frac{T \Delta}{2} \left(1- \frac{1}{K}\sum_{i=1}^K \left(\frac{\E_0 [N_{S_i}(T)]}{T} + \sqrt{\frac{\kl(\Prob_{0, T}, \Prob_{i, T})}{2}}  \right) \right) \label{eq:ave_regret_pinsker} \\
	& = \frac{T \Delta}{2} \left(1- \frac{1}{K} \frac{\sum_{i=1}^K \E_0 [N_{S_i}(T)]}{T} - \frac{1}{K}\sum_{i=1}^K \sqrt{{\E_{0} \left[ N_{S_i}(T) \right] \Delta^2}}   \right) \label{eq:ave_regret_decomposition} \\
	& \geq \frac{T \Delta}{2} \left(1 - \frac{1}{K} - \sqrt{\frac{\sum_{i=1}^K \E_{0} \left[ N_{S_i}(T) \right] \Delta^2}{K}} \right) \label{eq:ave_regret_concave}\\
	& \geq \frac{T \Delta}{2} \left(1 - \frac{1}{K} - \sqrt{\frac{2 \Delta R_{0, T}}{K}} \right) \label{eq:ave_regret_0} \\
	& \geq \frac{T \Delta}{2} \left(\frac{1}{2} - \sqrt{\frac{2 \Delta B}{K}} \right), \label{eq:ave_regret}
	\end{align}
	where \cref{eq:ave_regret_pinsker} comes from applying \cref{lm:pinsker} with $Z = {N_{S_i}(T)}/{T}$ and $\Prob = \Prob_{0, T}$ and $\Q = \Prob_{i, T}$; \cref{eq:ave_regret_decomposition} comes from applying \cref{eq:kl_difference}; \cref{eq:ave_regret_concave} comes from concavity of $\sqrt{\cdot}$ and the fact that ${\sum_{i=1}^K \E_0 [N_{S_i}(T)]} \leq {T}$; \cref{eq:ave_regret_0} comes from applying \cref{eq:regret_0}; and finally \cref{eq:ave_regret} comes from the fact that $K \geq 2$ by construction and the assumption that $R_{0, T} \leq B$.
	
	To obtain a large value for \cref{eq:ave_regret}, one could maximize $\Delta$ while still make $\sqrt{2 \Delta B/K} \leq 1/4$. Set $\Delta = 2^{-5}K B^{-1}$, following \cref{eq:ave_regret}, we obtain 
	\begin{align}
	\frac{1}{K} \sum_{i=1}^K R_{i, T} & \geq 2^{-8} T K B^{-1} \nonumber \\
	& = 2^{-8} T \left(\left\lfloor T^{\alpha}  \right\rfloor -1 \right) B^{-1} \label{eq:ave_regret_K}\\
	& \geq 2^{-10} T^{1 + \alpha} B^{-1}, \label{eq:ave_regret_final}
	\end{align}
	where \cref{eq:ave_regret_K} comes from the construction of $K$; and \cref{eq:ave_regret_final} comes from the assumption that $\left\lfloor T^{\alpha}  \right\rfloor -1 \geq T^{\alpha}/4$. 
	
	Now we only need to make sure $\Delta = 2^{-5}K B^{-1} \leq 1$. Since we have $K =\left\lfloor T^{\alpha}  \right\rfloor -1 \leq T^\alpha$ by construction and $T^{\alpha} \leq B$ by assumption, we obtain $\Delta = 2^{-5} K B^{-1} \leq 2^{-5} < 1$ as desired.
\end{proof}

\subsection{Proof of \cref{thm:pareto}}

\begin{lemma} 
	\label{lm:rate_comparison}
	Suppose an algorithm achieves rate function $\theta$, then for any $0 < \alpha \leq \theta(0)$, we have 
	\begin{align}
	\theta(\alpha) \geq 1 + \alpha - \theta(0). \label{eq:rate_func}
	\end{align}
\end{lemma}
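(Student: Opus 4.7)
The plan is to apply \cref{thm:lower_bound} with $\alpha' = 0$ and arbitrary $\alpha \in (0, \theta(0)]$, using the definition of rate function (\cref{def:rate}) to simultaneously bound the regret from above on $\mathcal{H}_T(0)$ (which supplies the parameter $B$) and on $\mathcal{H}_T(\alpha)$ (which will be contradicted if $\theta(\alpha)$ is too small).

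Concretely, fix an arbitrary $\epsilon > 0$. Since the algorithm achieves rate $\theta$, there exist constants $C_0, C_\alpha > 0$ and $T_0$ (depending on $\epsilon$ and $\alpha$) so that for all $T \geq T_0$
\begin{align*}
\sup_{\omega \in \mathcal{H}_T(0)} R_T \;\leq\; C_0\, T^{\theta(0) + \epsilon}
\quad\text{and}\quad
\sup_{\omega \in \mathcal{H}_T(\alpha)} R_T \;\leq\; C_\alpha\, T^{\theta(\alpha) + \epsilon}.
\end{align*}
Set $B := C_0\, T^{\theta(0)+\epsilon}$. For $T$ large enough, the hypotheses of \cref{thm:lower_bound} hold: the condition $T^\alpha \leq B$ follows since $\alpha \leq \theta(0) < \theta(0)+\epsilon$, and the condition $\lfloor T^\alpha \rfloor - 1 \geq \max\{T^\alpha/4,\, 2\}$ is automatic for $T$ sufficiently large because $\alpha > 0$.

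Applying \cref{thm:lower_bound} then gives
\begin{align*}
\sup_{\omega \in \mathcal{H}_T(\alpha)} R_T \;\geq\; 2^{-10}\, T^{1+\alpha}\, B^{-1} \;=\; 2^{-10} C_0^{-1}\, T^{\,1+\alpha-\theta(0)-\epsilon}.
\end{align*}
Combining with the upper bound on $\sup_{\omega \in \mathcal{H}_T(\alpha)} R_T$ yields $2^{-10} C_0^{-1}\, T^{\,1+\alpha-\theta(0)-\epsilon} \leq C_\alpha\, T^{\theta(\alpha)+\epsilon}$ for all $T$ sufficiently large. Letting $T \to \infty$ forces the exponent on the left to be at most the exponent on the right, namely $1 + \alpha - \theta(0) - \epsilon \leq \theta(\alpha) + \epsilon$, hence $\theta(\alpha) \geq 1 + \alpha - \theta(0) - 2\epsilon$. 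Since $\epsilon > 0$ was arbitrary, we conclude $\theta(\alpha) \geq 1 + \alpha - \theta(0)$, which is the claim.

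There is no real obstacle here beyond bookkeeping: the content is entirely in \cref{thm:lower_bound}, and the only care needed is in checking the hypotheses $T^\alpha \leq B$ and $\lfloor T^\alpha \rfloor - 1 \geq \max\{T^\alpha/4, 2\}$ for large $T$ (which use $\alpha \leq \theta(0)$ and $\alpha > 0$, respectively), and in noting that the $\epsilon$ slack in \cref{def:rate} disappears in the limit because it enters linearly on both sides of the final comparison.
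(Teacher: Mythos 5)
Your proof is correct and follows essentially the same route as the paper's: apply \cref{thm:lower_bound} with $\alpha'=0$ and $B$ of order $T^{\theta(0)+\epsilon}$ supplied by the rate-function definition, compare exponents as $T\to\infty$, and let $\epsilon\to 0$. The only cosmetic difference is that the paper takes $B=\max\{c_1,1\}\,T^{\theta(0)+\epsilon}$ so that $T^\alpha\leq B$ holds without invoking large $T$, whereas you absorb this into the ``$T$ sufficiently large'' clause; both are fine.
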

\begin{proof}
	Fix $0 < \alpha \leq \theta(0)$. For any $\epsilon > 0$, there exists constant $c_1$ and $c_2$ such that for sufficiently large $T$,
	\begin{align}
	\sup_{\omega \in {\cal H}_T(0)} R_T \leq c_1 T^{\theta(0) + \epsilon} \quad \mbox{and} \quad \sup_{\omega \in {\cal H}_T(\alpha)} R_T \leq  c_2 T^{\theta(\alpha) + \epsilon}. \nonumber 
	\end{align}
	Let $B = \max\{c_1,1\} \cdot T^{\theta(0) + \epsilon}$, we could see that $T^{\alpha} \leq T^{\theta(0)} \leq B$ holds by assumption. For $T$ large enough, the condition $\lfloor T^\alpha  \rfloor -1 \geq \max \{ T^\alpha/4, 2\}$ of \cref{thm:lower_bound} holds. We then have
	\begin{align}
	c_2 T^{\theta(\alpha) + \epsilon} \geq 2^{-10} T^{1 + \alpha} \left( \max\{c_1,1\} \cdot T^{\theta(0) + \epsilon} \right)^{-1} = 2^{-10} T^{1+\alpha - \theta(0) - \epsilon} /\max\{c_1,1\}. \nonumber
	\end{align}
	For $T$ sufficiently large, we then must have
	\begin{align}
	\theta(\alpha) + \epsilon \geq 1 + \alpha - \theta(0) - \epsilon. \nonumber
	\end{align}
	Let $\epsilon \rightarrow 0$ leads to the desired result.
\end{proof}

\begin{lemma}
\label{lm:rate_lower_bound}
Suppose an rate function $\theta$ is achieved by an algorithm, then we must have 
\begin{align}
\label{eq:rate_lower_bound}
    \theta(\alpha) \geq \min \{ \max \{ \theta(0), 1 + \alpha - \theta(0) \}, 1 \},
\end{align}
with $\theta(0) \in [1/2, 1]$.
\end{lemma}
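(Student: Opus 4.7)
The plan is to split on which term achieves the outer $\min\{\max\{\theta(0),1+\alpha-\theta(0)\},1\}$ and reduce each regime either to monotonicity of $\theta$ or to \cref{lm:rate_comparison}. Before that, I would establish the baseline $\theta(0)\ge 1/2$ by noting that ${\cal H}_T(0)$ contains ordinary two-arm bandit instances (e.g.\ $n=2$, $m=1$ satisfies $n/m\le 2T^0$), on which the classical $\Omega(\sqrt{T})$ minimax lower bound forces any achievable rate function to satisfy $\theta(0)\ge 1/2$; the upper bound $\theta(0)\le 1$ is immediate since $\theta$ takes values in $[0,1]$ by \cref{def:rate}.

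For the main inequality, I would separate three regimes of $\alpha$. When $\alpha\le 2\theta(0)-1$, the max equals $\theta(0)\le 1$, so the target reduces to $\theta(\alpha)\ge\theta(0)$, which follows from $\theta$ being non-decreasing (this regime is nonempty precisely because $\theta(0)\ge 1/2$). When $2\theta(0)-1<\alpha\le\theta(0)$, the max is $1+\alpha-\theta(0)\in[\theta(0),1]$, and \cref{lm:rate_comparison} gives exactly $\theta(\alpha)\ge 1+\alpha-\theta(0)$, matching the right-hand side.

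The delicate regime is $\alpha>\theta(0)$, where $1+\alpha-\theta(0)>1$ and the capped right-hand side equals $1$, so I would need $\theta(\alpha)\ge 1$. The idea is to apply \cref{lm:rate_comparison} at the boundary value $\alpha=\theta(0)$ (admissible because $\theta(0)\ge 1/2>0$), yielding $\theta(\theta(0))\ge 1+\theta(0)-\theta(0)=1$; monotonicity then propagates this to $\theta(\alpha)\ge\theta(\theta(0))\ge 1$ for all $\alpha>\theta(0)$. The main obstacle I anticipate is precisely this boundary step: \cref{lm:rate_comparison} does not directly cover $\alpha>\theta(0)$, so establishing $\theta(0)\ge 1/2$ at the outset is essential — it is what makes the boundary application of \cref{lm:rate_comparison} legitimate and lets monotonicity carry the conclusion $\theta(\alpha)=1$ throughout the upper regime.
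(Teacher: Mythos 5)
Your proposal is correct and follows essentially the same route as the paper's proof: monotonicity of $\theta$ (from ${\cal H}_T(\alpha^\prime) \subseteq {\cal H}_T(\alpha)$) gives $\theta(\alpha) \geq \theta(0)$, \cref{lm:rate_comparison} gives $\theta(\alpha) \geq 1+\alpha-\theta(0)$ on $(0,\theta(0)]$, and the boundary application at $\alpha=\theta(0)$ combined with the trivial cap $R_T \leq T$ yields $\theta(\theta(0))=1$, which monotonicity propagates to all $\alpha > \theta(0)$. Your extra split at $\alpha = 2\theta(0)-1$ is harmless but unnecessary, since the paper simply proves both lower bounds simultaneously on $(0,\theta(0)]$ and takes the max.
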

\begin{proof}
For any rate function $\theta$ achieved by an algorithm, we first notice that $\theta(\alpha) \geq \theta(\alpha^\prime)$ for any $0 \leq \alpha^\prime < \alpha \leq 1$ since ${\cal H}_T(\alpha^\prime) \subseteq {\cal H}_T(\alpha)$; this also implies $\theta(\alpha) \geq \theta(0)$. From \cref{lm:rate_comparison}, we further obtain $\theta(\alpha) \geq 1 + \alpha - \theta(0)$ if $\alpha \leq \theta(0)$. Thus, for any $\alpha \in (0, \theta(0)]$, we have 
\begin{align}
\theta(\alpha) \geq \max \{ \theta(0), 1 + \alpha - \theta(0) \}. \label{eq:rate_comparison}
\end{align}
Note that this indicates $\theta(\theta(0)) = 1$, as we trivially have $R_T \leq T$. For any $\alpha \in (\theta(0), 1]$, we have $\theta(\alpha) \geq \theta(\theta(0)) = 1$, which leads to $\theta(\alpha) = 1$ for $\alpha \in [\theta(0), 1]$. To summarize, we obtain the desired result in \cref{eq:rate_lower_bound}. We have $\theta(0) \in [1/2, 1]$ since the minimax optimal rate among problems in ${\cal H}_T(0)$ is $1/2$.
\end{proof}

\pareto*

\begin{proof}
From \cref{thm:regret_multi}, we know that the rate in \cref{eq:admissible_rate} is achieved by \cref{alg:regret_multi} with input $\beta$. We only need to prove that no other algorithms achieve strictly smaller rates in pointwise order.

Suppose, by contradiction, we have $\theta^\prime$ achieved by an algorithm such that $\theta^\prime(\alpha) \leq \theta_{\beta}(\alpha)$ for all $\alpha \in [0, 1]$ and $\theta^\prime(\alpha_0) < \theta(\alpha_0)$ for at least one $\alpha_0 \in [0, 1]$. We then must have $\theta^\prime(0) \leq \theta_{\beta}(0) = \beta$. We consider the following two exclusive cases.

\textbf{Case 1 $\theta^\prime(0) = \beta$.} According to \cref{lm:rate_lower_bound}, we must have $\theta^\prime \geq \theta_\beta$, which leads to a contradiction.

\textbf{Case 2 $\theta^\prime(0) = \beta^\prime < \beta$.} According \cref{lm:rate_lower_bound}, we must have $\theta^\prime \geq \theta_{\beta^\prime}$. However, $\theta_{\beta^\prime}$ is not strictly better than $\theta_{\beta}$, e.g., $\theta_{\beta^\prime}(2\beta - 1) =  2\beta - \beta^\prime > \beta = \theta_{\beta}(2\beta - 1)$, which also leads to a contradiction.
\end{proof}

\section{Omitted proofs for \cref{section:extra_info}}
\label{appendix:extra_info}

\subsection{Proof of \cref{lm:subroutine_best}}

\subroutineBest*

\begin{proof}
	We first consider the case when $\lceil 2T^{\alpha_i} \log \sqrt{T} \rceil < T$. Let $E$ denote the event that none of the best arm is selected in $S_{\alpha_i}$. According to \cref{lm:prob_replacement}, the definition of $\alpha$ and the assumption that $\alpha\leq \alpha_i$, we know that $\Prob(E) \leq 1/\sqrt{T}$. We now upper bound the regret:
	\begin{align}
	R_T 
	& \leq 18 \, \sqrt{|S_{\alpha_i}| T} \cdot \Prob(\neg E)  +  T \cdot \Prob( E)  \label{eq:subroutine_moss} \\
	& \leq  18 \, \sqrt{|S_{\alpha_i}| T} \cdot 1   + T \cdot \frac{1}{\sqrt{T}}  \nonumber \\
	& = 18 \, (\log T)^{1/2} \cdot T^{(1+\alpha_i)/2} + \sqrt{T}  \nonumber \\
	& \leq 19 \, (\log T)^{1/2} \cdot T^{(1+\alpha_i)/2} \nonumber \\
	& < 19 \, (\log T)^{1/2} \cdot T^{(1+\alpha )/2} \cdot T^{1/(2 \, \lceil \log T \rceil)} \label{eq:subroutine_replce}\\
	& \leq 19  \sqrt{e}\, (\log T)^{1/2} \cdot T^{(1+\alpha )/2}, \label{eq:subroutine_logT}
	\end{align}
	where \cref{eq:subroutine_moss} comes from the regret bound of \moss; \cref{eq:subroutine_replce} comes from the assumption that $\alpha_i  < \alpha + 1/\lceil \log T \rceil$; and \cref{eq:subroutine_logT} comes from the fact that $T^{1/(2 \, \lceil \log T \rceil)} = e^{( \log T / (2\, \lceil \log T \rceil))} \leq \sqrt{e}$.
	
	We then consider the case when $\lceil 2T^{\alpha_i} \log \sqrt{T} \rceil \geq T$: we trivially upper bound the regret by $ R_T \leq T \leq \lceil 2T^{\alpha_i} \log \sqrt{T} \rceil \leq 2 \, \log T \cdot T^{(1+\alpha_i)/2} \leq 2 \, \sqrt{e}\, \log T \cdot T^{(1+\alpha)/2}$ following $\alpha_i \leq (1+\alpha_i)/2$ and a similar analysis as above.
\end{proof}

\subsection{Proof of \cref{thm:extra_info}}

We first provide a martingale (difference) concentration result from \cite{wainwright2019high} (a rewrite of Theorem 2.19).

\begin{lemma}
\label{lm:martingale}
Let $\{ D_t\}_{t=1}^{\infty}$ be a martingale difference sequence adapted to filtration $\{{\cal F}_{t}\}_{t=1}^{\infty}$. If $\E [\exp(\lambda D_t) \vert {\cal F}_{t-1}] \leq \exp(\lambda^2 \sigma^2 /2)$ almost surely for any $\lambda \in \R$, we then have 
\begin{align}
    \Prob \left( \bigg\vert \sum_{i=1}^t D_i \bigg\vert \geq \epsilon \right) \leq 2 \exp \left(- \frac{\epsilon^2}{2t \sigma^2} \right). \nonumber
\end{align}
\end{lemma}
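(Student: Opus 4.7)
The plan is to prove this via the standard Chernoff-bound argument combined with iterated conditioning, which is the textbook route for Azuma--Hoeffding-type concentration. I would first establish a one-sided tail bound, i.e., $\Prob(\sum_{i=1}^t D_i \geq \epsilon) \leq \exp(-\epsilon^2/(2t\sigma^2))$, and then obtain the two-sided statement by applying the one-sided bound to the sequence $\{-D_t\}$ (which is also a martingale difference sequence with the same conditional sub-Gaussian parameter) and a union bound, absorbing the factor $2$ into the final inequality.

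For the one-sided bound, I would start from the Chernoff trick: for any $\lambda > 0$,
\begin{align*}
\Prob\left(\sum_{i=1}^t D_i \geq \epsilon\right) \leq e^{-\lambda \epsilon}\, \E\!\left[\exp\!\left(\lambda \sum_{i=1}^t D_i\right)\right].
\end{align*}
The key step is to control the moment generating function of the partial sum by peeling off one increment at a time. Using the tower property and measurability of $\sum_{i=1}^{t-1} D_i$ with respect to $\mathcal{F}_{t-1}$,
\begin{align*}
\E\!\left[\exp\!\left(\lambda \sum_{i=1}^t D_i\right)\right] = \E\!\left[\exp\!\left(\lambda \sum_{i=1}^{t-1} D_i\right) \E\!\left[e^{\lambda D_t} \,\big|\, \mathcal{F}_{t-1}\right]\right] \leq e^{\lambda^2 \sigma^2 / 2}\, \E\!\left[\exp\!\left(\lambda \sum_{i=1}^{t-1} D_i\right)\right],
\end{align*}
where the inequality uses the hypothesized conditional MGF bound. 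Iterating this $t$ times gives $\E[\exp(\lambda \sum_{i=1}^t D_i)] \leq \exp(t \lambda^2 \sigma^2 / 2)$.

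Plugging back into the Chernoff inequality yields $\Prob(\sum_{i=1}^t D_i \geq \epsilon) \leq \exp(-\lambda \epsilon + t\lambda^2 \sigma^2 / 2)$ for every $\lambda > 0$. Minimizing the right-hand side over $\lambda$ by choosing $\lambda^\star = \epsilon / (t \sigma^2)$ gives the one-sided bound $\exp(-\epsilon^2/(2t\sigma^2))$. Combining with the analogous bound on the lower tail finishes the proof. There is no real obstacle here; the only mild subtlety is making sure the conditional sub-Gaussian hypothesis is invoked correctly inside the tower property, i.e., that the factor $\exp(\lambda \sum_{i=1}^{t-1} D_i)$ can be pulled out of the inner expectation because it is $\mathcal{F}_{t-1}$-measurable, after which the induction is immediate.
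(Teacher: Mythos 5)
Your proof is correct and follows the standard Chernoff/iterated-conditioning argument for Azuma--Hoeffding-type bounds, which is precisely the argument behind the result the paper invokes here (the paper gives no proof of its own, simply citing Theorem 2.19 of Wainwright's book, whose proof is this same one). Nothing further is needed.
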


\extraInfo*

\begin{proof}
	This proof largely follows the proof of Theorem 4 in \cite{locatelli2018adaptivity}. For any $T \in \N_+$ and $i \in [\lceil \log T \rceil]$, recall $\sr_i$ is the subroutine initialized with $T$ and $\alpha_i  = i/[\lceil \log T \rceil]$. We use $T_{i, t}$ to denote the number of samples allocated to $\sr_i$ up to time $t$, and represent its empirical regret at time $t$ as $\widehat{R}_{i,t} = T_{i, t}\cdot  \mu_{\star} - \sum_{t=1}^{T_{i,t}} X_{i, t}$ where $X_{i,t} \sim \nu_{A_{i,t}}$ is the $t$-th empirical reward obtained \emph{by} $\sr_i$ and $A_{i,t}$ is the index of the $t$-th arm pulled \emph{by} $\sr_i$. We consider the corresponding expected regret $R_{i, t}  = T_{i, t}\cdot  \mu_{\star} - \sum_{t=1}^{T_{i,t}} \E [\mu_{A_{i, t}} \vert T_{i, t}]$ conditionally on $T_{i, t}$ ($R_{i, t}$ is random in $T_{i, t}$). We choose $\delta = 1/\sqrt{T}$ as the confidence parameter and provide $\delta^\prime = \delta / \lceil \log T \rceil$ failure probability to each subroutine.
	
	Notice that $R_{i, t}  - \widehat{R}_{i,t} = \sum_{t=1}^{T_{i, t}}\left( X_{i,t} - \E[\mu_{A_{i, t}} \vert T_{i, t}] \right)$ is a martingale with respect to the filtration ${\cal F}_{t} = \sigma \big(\bigcup_{i \in [\lceil \log T \rceil]} \{A_{i,1}, X_{i,1}, \dots, A_{i,T_{i,t}}, X_{i, T_{i, t}}\} \big)$; and $(R_{i, t} - \widehat{R}_{i,t} ) - (R_{i, t-1} - \widehat{R}_{i,t-1} )$ defines a martingale difference sequence. Since $X_{i,T_{i, t}} - \E[\mu_{A_{i, T_{i, t}}} \vert T_{i, t}] = (X_{i,T_{i, t}} - \mu_{A_{i, T_{i, t}}}) + (\mu_{A_{i, T_{i, t}}} - \E[\mu_{A_{i, T_{i, t}}} \vert T_{i, t}])$ is conditionally $(1/2)-$sub-Gaussian \cite{cryptoeprint:2019:520}, applying \cref{lm:martingale} together with a union bound gives:
	\begin{align}
	\Prob \left( \forall i \in [\lceil \log T \rceil ], \forall t \in [T]: |\widehat{R}_{i,t} - R_{i, t} | \geq \sqrt{T_{i, t} \cdot \log \left( 2T \lceil \log T \rceil/\delta \right)} \right) \leq \delta.
	\end{align}
	We use $E = \left\{ \forall i \in [\lceil \log T \rceil ], \forall t \in [T]: |\widehat{R}_{i,t} - R_{i, t} | < \sqrt{T_{i, t} \cdot \log \left( 2T \lceil \log T \rceil/\delta \right)} \right\}$ to denote the good event that holds true with probability at least $1-\delta$. Since the regret could be trivially upper bounded by $T \cdot \delta = \sqrt{T}$ when $E$ doesn't hold, we only focus on the case when event $E$ holds in the following.
	
	Fix any subroutine $k \in [\lceil \log T \rceil]$ and consider its empirical regret $\widehat{R}_{k,T}$ up to time $T$. For any $j \neq k$, let $T_j \leq T$ be the last time that the subroutine $\sr_j$ was invoked, we have 
	\begin{align}
	\widehat{R}_{j,T_j } & \leq \widehat{R}_{k, T_j } \nonumber \\
	& \leq R_{k, T_j } + \sqrt{T_{k, T_j} \cdot \log \left( 2T \lceil \log T \rceil/\delta \right)} \nonumber \\
	& \leq R_{k, T } + \sqrt{T \cdot \log \left( 2T \lceil \log T \rceil/\delta \right)}, \label{eq:extra_info_non_decreasing}
	\end{align}
	where \cref{eq:extra_info_non_decreasing} comes from the fact that the cumulative pseudo regret $R_{k,t }$ in non-decreasing in $t$. Since $\sr_j$ will only run additional $\lceil \sqrt{T} \rceil$ rounds after it was selected at time $T_j$, we further have 
	\begin{align}
	\widehat{R}_{j, T } & \leq 	\widehat{R}_{j, T_j }  + \left\lceil \sqrt{T} \right\rceil \nonumber \\
	& \leq R_{k, T } + \sqrt{5 T \cdot \log \left( 2T \lceil \log T \rceil/\delta \right)}, \label{eq:extra_info_ceiling}
	\end{align}
	where \cref{eq:extra_info_ceiling} comes from the combining \cref{eq:extra_info_non_decreasing} with a trivial bounding $\lceil \sqrt{T} \rceil \leq \sqrt{4T}$ for all $T \in \N_+$. Combining \cref{eq:extra_info_ceiling} with the fact that $R_{j, T} \leq \widehat{R}_{j, T} + \sqrt{T \cdot \log \left( 2T \lceil \log T \rceil/\delta \right)}$ leads to
	\begin{align}
	    R_{j, T} \leq R_{k, T} + 4\sqrt{T \cdot \log \left( 2T \lceil \log T \rceil/\delta \right)}. \label{eq:extra_info_bound}
	\end{align}
	
	Let $i_\star \in [\lceil \log T \rceil]$ denote the index such that $\alpha_{i_\star - 1} < \alpha \leq  \alpha_{i_\star}$. As the total regret is the sum of all subroutines, we have that, for some universal constant $C$,
	\begin{align}
	\sum_{i=1}^{\lceil \log T \rceil} {R}_{i, T }
	& \leq \lceil \log T \rceil \cdot \left( R_{i_\star, T } +   4\sqrt{ T \cdot \log \left( 2T \lceil \log T \rceil/\delta \right)} \right) \label{eq:extra_info_star} \\
	& \leq \lceil \log T \rceil \cdot \left( 19  \sqrt{e}\, \log T \cdot T^{(1+\alpha )/2} +  4\sqrt{ T \cdot \log \left( 2T^{3/2} \lceil \log T \rceil \right)} \right) \label{eq:extra_info_best} \\
	& \leq C \, \left( \log T \right)^{2} T^{(1+ \alpha )/ 2}, \nonumber
	\end{align}
	where \cref{eq:extra_info_star} comes from setting $k = i_\star$ in \cref{eq:extra_info_bound}, and in \cref{eq:extra_info_best} we apply \cref{lm:subroutine_best} with the non-decreasing nature of cumulative pseudo regret, and take $\delta = 1/\sqrt{T}$. Integrate once more leads to the desired result.
\end{proof}

\subsection{Anytime version}
\label{appendix_anytime_extra}

The anytime version of \cref{alg:extra_info} could be constructed as following.

\begin{algorithm}[H]
	\caption{Anytime version of \algParallel}
	\label{alg:extra_info_anytime} 
	\renewcommand{\algorithmicrequire}{\textbf{Input:}}
	\renewcommand{\algorithmicensure}{\textbf{Output:}}
	\begin{algorithmic}[1]
		\FOR {$i = 0, 1, \dots$}
		\STATE Run \cref{alg:extra_info} with the optimal expected reward $\mu_\star$ for $2^i$ rounds.
		\ENDFOR 
	\end{algorithmic}
\end{algorithm}

\begin{corollary}
	For any time horizon $T$ and $\alpha \in [0, 1]$ unknown to the learner, run \cref{alg:extra_info_anytime} with optimal expected reward $\mu_\star$ leads to the following anytime regret upper:
	\begin{align}
	\sup_{\omega \in {\cal H}_T(\alpha)} R_T \leq C \, \left( \log T \right)^{2} T^{(1+ \alpha )/ 2}, \nonumber
	\end{align}
	where $C$ is a universal constant.
\end{corollary}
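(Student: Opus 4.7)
The plan is to apply the doubling trick on the fixed-horizon guarantee \cref{thm:extra_info}, paralleling the structure of \cref{corollary:regret_multi_anytime}. Let $t_\star$ be the smallest integer such that $\sum_{i=0}^{t_\star} 2^i = 2^{t_\star+1}-1 \geq T$; then $2^{t_\star} \leq T$, hence $t_\star \leq \log_2 T$. Only the first $t_\star+1$ outer iterations of \cref{alg:extra_info_anytime} can contribute to the regret accumulated by time $T$, so I would write $R_T \leq \sum_{i=0}^{t_\star} R_{2^i}$, where $R_{2^i}$ denotes the regret incurred while running \cref{alg:extra_info} with horizon $2^i$ and extra information $\mu_\star$.

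The key subtlety is that an instance $\omega \in {\cal H}_T(\alpha)$ is not automatically in ${\cal H}_{2^i}(\alpha)$, because the hardness exponent defined by $\psi$ depends on the horizon. Fixing $n$ and $m$, the hardness relative to horizon $2^i$ is $\alpha_i = \max\{0,\, \log_2(n/(2m))/i\}$, so $\omega \in {\cal H}_{2^i}(\alpha_i)$. From the hypothesis $n/m \leq 2T^\alpha$ we get $\log_2(n/(2m)) \leq \alpha \log_2 T$, which gives the uniform bound $i\alpha_i \leq \alpha \log_2 T$, equivalently $2^{i\alpha_i} \leq \max\{1, n/(2m)\} \leq T^\alpha$.

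I would then split the sum by whether $\alpha_i > 1$ or $\alpha_i \leq 1$. When $\alpha_i > 1$, \cref{thm:extra_info} does not directly apply; this regime corresponds to $2^i < n/(2m)$, so I use the trivial bound $R_{2^i} \leq 2^i$. These rounds satisfy $i \leq \log_2(n/(2m))$, so their contributions form a geometric series bounded by $2 \cdot n/(2m) \leq 2T^\alpha \leq 2T^{(1+\alpha)/2}$, where the last inequality uses $\alpha \leq 1$. For the rounds with $\alpha_i \leq 1$, I invoke \cref{thm:extra_info} on horizon $2^i$ with hardness $\alpha_i$, yielding
\begin{align*}
R_{2^i} \leq C (\log 2^i)^2 (2^i)^{(1+\alpha_i)/2} = C\, i^2\, 2^{i/2}\, 2^{i\alpha_i/2} \leq C\, i^2\, 2^{i/2}\, T^{\alpha/2}.
\end{align*}
Summing over $i \in \{0,1,\dots,t_\star\}$ and bounding the geometric-like series $\sum_{i=0}^{t_\star} i^2 2^{i/2} = O(t_\star^2\, 2^{t_\star/2}) = O((\log T)^2 \sqrt{T})$ (via integral comparison as in \cref{eq:anytime_integral}) gives a contribution of order $(\log T)^2 \cdot T^{(1+\alpha)/2}$. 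Combining the two regimes and absorbing the lower-order $2T^{(1+\alpha)/2}$ piece into the constant yields $R_T \leq C(\log T)^2 T^{(1+\alpha)/2}$ for a universal $C$.

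The main obstacle is the second paragraph: recognizing that the hardness parameter is not invariant under the doubling trick, and controlling it uniformly. Once this reduction $2^{i\alpha_i/2} \leq T^{\alpha/2}$ is in hand, the remainder is a routine geometric-sum estimate, and the early "trivial" rounds in which \cref{thm:extra_info} cannot be applied are harmless precisely because $\alpha \leq 1$ forces $T^\alpha \leq T^{(1+\alpha)/2}$.
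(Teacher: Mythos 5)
Your proof is correct and follows essentially the same route as the paper, which simply invokes the doubling trick and states that the argument mirrors the proof of \cref{corollary:regret_multi_anytime}. In fact you are more careful than the paper on one point: you explicitly observe that the hardness exponent $\psi$ is horizon-dependent, control the per-epoch exponent via $2^{i\alpha_i}\leq T^{\alpha}$, and handle the early epochs where $\alpha_i>1$ (so that \cref{thm:extra_info} does not directly apply) with the trivial bound $R_{2^i}\leq 2^i$ --- a subtlety the paper's one-line proof glosses over.
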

\begin{proof}
The proof is similar to the one for \cref{corollary:regret_multi_anytime}.
\end{proof}

\end{document}